\documentclass{article}

 \usepackage[preprint]{neurips_2026}

\usepackage[utf8]{inputenc} 
\usepackage[T1]{fontenc}    
\usepackage{hyperref}       
\usepackage{url}            
\usepackage{booktabs}       
\usepackage{amsfonts}       
\usepackage{nicefrac}       
\usepackage{microtype}      
\usepackage{xcolor}         

\usepackage{graphicx}
\usepackage{enumitem}
\usepackage{amsmath}
\usepackage{multirow}
\usepackage{caption}
\usepackage{wrapfig}
\usepackage{amsthm}

\theoremstyle{plain}
\newtheorem{theorem}{Theorem}[section]
\newtheorem*{theorem*}{Theorem}

\newtheorem{lemma}[theorem]{Lemma}
\newtheorem*{lemma*}{Lemma}

\theoremstyle{definition}

\theoremstyle{remark}

\def\algname{\textsc{ORIGIN}}

\newcommand{\qi}[1]{{\textsf{\textcolor{orange}{[Qi: #1]}}}}
\newcommand{\rz}[1]{{\textsf{\textcolor{magenta}{[RZ: #1]}}}}
\newcommand{\zc}[1]{{\textsf{\textcolor{blue}{[ZC: #1]}}}}
\newcommand{\yc}[1]{{\textsf{\textcolor{purple}{[Yuchen: #1]}}}}

\newcommand{\lc}[1]{{\textcolor{brown}{[LC: #1]}}}
\newcommand{\hh}[1]{{\textsf{\textcolor{red}{[HH: #1]}}}}
\newcommand{\kt}[1]{{\textsf{\textcolor{cyan}{[KT: #1]}}}}
\newcommand{\mkclean}{
    \renewcommand{\qi}[1]{}
    \renewcommand{\rz}[1]{}
    \renewcommand{\zc}[1]{}
    \renewcommand{\hh}[1]{}
    \renewcommand{\yc}[1]{}
    \renewcommand{\lc}[1]{}
    \renewcommand{\kt}[1]{}
}

\title{From Inference to Adaptation: A Unified Optimal Transport View of Vision Language Model}

\author{%
  Qi Yu$^1$\thanks{Correspondence to: Qi Yu <qiyu6@illinois.edu>} ~~
  Zhichen Zeng$^1$ ~~
  Katherine Tieu$^1$ ~~
  Xiyuan Yang$^1$ ~~
  Ruizhong Qiu$^1$ ~~
  Yuchen Yan$^2$ \\
  \textbf{Lihui Liu}$^3$ ~~
  \textbf{Yanjun Zhao}$^1$ ~~
  \textbf{Lingjie Chen}$^1$ ~~
  \textbf{Jingrui He}$^1$ ~~
  \textbf{Hanghang Tong}$^1$ \\
  \\
  $^1$ University of Illinois Urbana-Champaign ~~
  $^2$ Amazon ~~
  $^3$ Wayne State University
}

\begin{document}

\maketitle

\mkclean

\begin{abstract}
Vision-language models (VLMs) have demonstrated remarkable zero-shot capabilities yet remain sensitive to real-world distribution shifts during inference.
Although significant efforts are devoted to adapting VLMs at test time, they rely heavily on noisy pseudo-labels predicted directly from raw embedding similarities during inference, which are unreliable under distribution shift and mislead the adaptation.
To avoid noise amplification, existing works craft coarse-grained surrogate objectives during adaptation, which fail to explicitly model sample-level relationships across different modalities, creating objective mismatch with inference, thus leading to marginal performance improvement.
In this work, we aim to bridge the detached objectives of inference and adaptation for VLMs, and propose a principled VLM TTA method called \algname.
For \textit{VLM inference}, we formulate the zero-shot image classification task as a cross-modal alignment problem encoded via a Wasserstein OT formulation, providing robust pseudo-labels at the sample-level to effectively adapt VLMs.
For \textit{VLM adaptation}, we adopt a soft-label InfoNCE loss to adapt VLMs based on the OT-induced pseudo-labels, leveraging fine-grained supervisions to explicitly model relationships of individual image-text pairs via contrastive learning, which empowers accurate inference at the same granularity.
Moreover, we theoretically reveal that the InfoNCE loss can be neatly reformulated as a Wasserstein OT formulation, thereby unifying the objectives of the inference and adaptation of VLMs to achieve their mutual benefits.
Extensive experiments demonstrate the effectiveness and efficiency of our methods, outperforming the best-performing methods by up to 7\% with state-of-the-art efficiency.
\end{abstract}

\section{Introduction}\label{sec:intro}
Vision-language models (VLMs), such as CLIP~\citep{radford2021learning}, learn to project image and text embeddings into a shared embedding space, and have achieved remarkable zero-shot capability across diverse multimodal learning tasks, such as image classification~\citep{radford2021learning}, image captioning~\citep{li2022blip}, and semantic segmentation~\citep{luddecke2022image}. 
Despite the success of pre-trained VLMs, the aligned embedding spaces are vulnerable to real-world distribution shifts, such as image corruption~\citep{hendrycks2019benchmarking}, causing significant performance degradation of VLMs at deployment. 
To mitigate this issue, a predominant approach is to adapt pre-trained VLMs at test time, i.e., test-time adaptation (TTA), based on unlabeled test data. This paradigm allows pre-trained VLMs to adapt dynamically to shifted domains at deployment without accessing large-scale source training data or test labels.

\begin{figure}[t]
  \includegraphics[width=\linewidth]{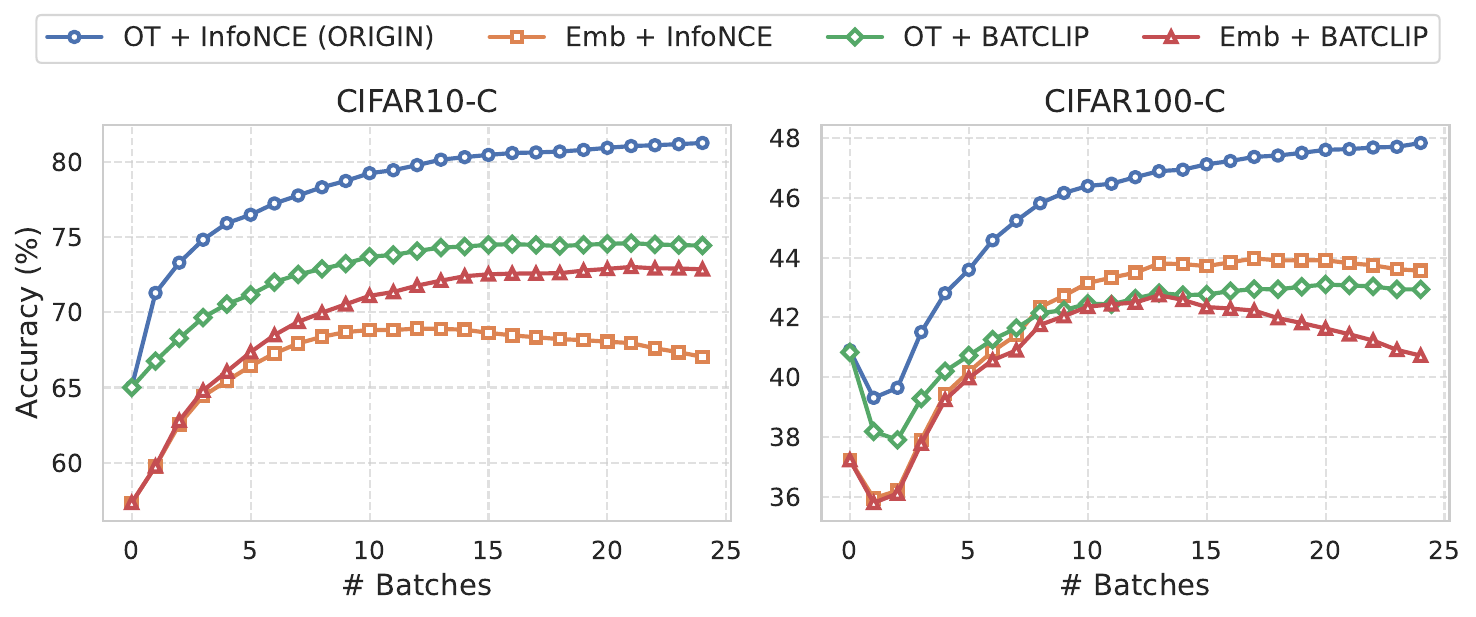}
  \caption{Training dynamics of different combinations of inference and adaptation objective on CIFAR10-C. \textbf{(1)} Embedding-based inference are consistently outperformed by OT-based inference under the same loss function, and may even leads to performance degradation during adaptation due to excessive noise amplification; \textbf{(2)} \algname\ adopts InfoNCE loss with OT-induced pseudo-labels to explicitly model sample-level relationships across modalities, increasing accuracy monotonically during adaptation and consistently achieves the best performance.}
  \vspace{-10pt}
  \label{fig:teaser}
\end{figure}

Although existing TTA methods have demonstrated strong improvement across different benchmarks, they bear the following two key limitations which lead to suboptimal adaptation performance.
Firstly \emph{(Limitation \#1)}\zc{do we have a name for Limitations \#1/2?}, existing TTA methods predominantly rely zero-shot predictions of VLMs inferred from the raw embedding similarity of images and texts under distribution shifts, thus generating noisy and unreliable pseudo-labels that later limit or even misguide the adaptation of pre-trained VLMs~\citep{shu2022tpt,li2024tda,sheng2025illusion}. As shown in Table~\ref{tab:exp_ablation} and Figure~\ref{fig:teaser}, pseudo-labels inferred from raw embedding similarities suffer from poor accuracy, leading to suboptimal or even degraded performance during adaptation.

Secondly \emph{(Limitation \#2)}, limited by the noisiness of pseudo-labels, existing TTA methods rely heavily on crafting surrogate objectives on aggregated class-level prototypes (e.g., average embedding of a class)~\citep{maharana2025batclip, bao2025mint} to avoid excessive noise amplification. However, class-level supervisions are inherently coarse-grained, treating individual samples uniformly. Therefore, they fail to model cross-modal relationships between individual image-text pairs, which inherently limits the adaptation performance evaluated on sample-level alignment tasks during inference. As shown in Figure~\ref{fig:teaser} and Table~\ref{tab:exp_ablation}, while the objective of BATCLIP~\citep{maharana2025batclip}, the best-performing baseline, can sometimes avoid noise amplification during adaptation, its performance are inherently limited by the class-level design of its adaptation objective.

In this paper, we propose a principled VLM TTA framework which leverages \underline{o}ptimal t\underline{r}ansport (OT) to br\underline{i}d\underline{g}e the \underline{i}nference and adaptatio\underline{n} of VLMs at test time, called \algname.
At the \emph{inference} stage, we formulate the zero-shot image classification task as an alignment problem across the image and text modalities, whose results are inferred from the transport plan of a Wasserstein OT formulation. Empowered by the global view and constrained nature of OT~\citep{chen2020graph, yu2025joint, zhu2025enhancing, yu2025planetalign}, predictions inferred from the solved transport map is significantly more accurate than that inferred from raw embedding similarities, as shown in Table~\ref{tab:exp_ablation}, therefore generating more robust and reliable pseudo-labels for TTA \emph{(Limitation 1)}.
At the \emph{adaptation} stage, equipped with the reliable pseudo-labels derived from the Wasserstein OT formulation, we directly adopt the soft-label InfoNCE loss, aligning with the pre-training objective of VLMs, to explicitly model the embedding relationships of image-text pairs via contrastive learning, leverging fine-grained supervision signals to unleash the full potential of test data for adapting the model at test time \emph{(Limitation 2)}. As shown in Figure~\ref{fig:teaser}, \algname\ adopts the soft-label InfoNCE loss with OT-induced pseudo-labels, guiding the adaptation effectively and leading to consistent improvement in accuracy.

Finally, to further bridge the test-time inference and adaptation of VLMs, we theoretically prove that the InfoNCE loss can be reformulated to an entropic Wasserstein OT formulation. In this way, the inference and adaptation of VLMs are no longer decoupled stages but a mutually beneficial process with a shared objective function through the lens of OT. 
While robust and reliable pseudo-labels produced at the inference stage help improve the adaptation of VLMs, effective adaptation of VLMs also aligns disparate embeddings of text and image modalities under distribution shifts, thus improving the inference performance. By alternating between optimizations of the OT transport plan and the parameters of VLMs, we achieve the mutual benefits of robust inference and fine-grained adaptation, which maximally improve the TTA performance of VLMs. To evaluate the effectiveness of our proposed \algname, we conduct extensive experiments on three standard benchmarking datasets, showing that our method significantly outperforms state-of-the-art TTA methods with up to 7\% mean accuracy improvement without additional latency. Our main contribution are summarized as follows:
\begin{itemize}[nosep, wide=0pt, leftmargin=*, after=\strut, label=\textbullet]
    \item \textbf{Unified View.} To our best knowledge, we are the first to theoretically bridge the inference and adapation of VLMs at test-time through the lens of optimal transport.
    \item \textbf{Novel Methodology.} We proposed a novel VLM TTA methods based on optimal transport which aligns the loss functions of adaptation with the pretraining objective of VLMs.
    \item \textbf{Strong Performance.} Extensive experiments on three standard VLM TTA benchmarks demonstrate significant performance gain compared to state-of-the-art baselines.
\end{itemize}
\section{Preliminaries}\label{sec:pre}
In this section, we first introduce preliminaries on VLM TTA in Section~\ref{subsec:pre-tta}, followed by optimal transport in Section~\ref{subsec:pre-ot}. In this paper, we use bold uppercase letters for matrices (e.g., $\mathbf{S}$), bold lowercase letters for vectors (e.g., $\boldsymbol{\mu}$), calligraphic uppercase letters for sets (e.g., $\mathcal{T}$) and lowercase letters for scalars (e.g., $k$).

\subsection{Test-time Adaptation of VLMs}\label{subsec:pre-tta}
We denote the text space by $\mathcal{T}$  and image space by $\mathcal{V}$. A pre-trained VLM consists of a text encoder $f_\phi: \mathcal{T}\rightarrow\mathbb{R}^d$ and an image encoder $f_\theta: \mathcal{V}\rightarrow\mathbb{R}^d$, which map images and texts into a shared embedding space of dimension $d$. For a zero-shot image classification task with $n$ different classes, the image encoder $f_\theta$ projects an image $v$ to an embedding $\mathbf{v}\in\mathbb{R}^d$, and the text encoder $f_\phi$ projects text descriptions of $c$ different classes, e.g., "a photo of a <class>" to embeddings $\{\mathbf{t}_j\in\mathbb{R}^d\}_{j=1}^c$. Predictions are inferred by comparing the embedding similarity of the test image and text descriptions of different classes, e.g., $\mathop{\arg\max}_j\mathbf{v}^\top\mathbf{t}_j$. In this paper, we consider the online TTA setting, where test images under distribution shifts arrives sequentially in a batch manner~\citep{maharana2025batclip}.

\subsection{Optimal Transport}\label{subsec:pre-ot}
OT is becoming a popular and effective tool for aligning distributional structures~\citep{santambrogio2015optimal}. Formally, let $\boldsymbol{\mu}=\sum_{i=1}^{n}\mu_i\delta_{x_i}$ and $\boldsymbol{\nu}=\sum_{j=1}^{m}\nu_i\delta_{y_j}$ be two discrete probability distributions where $\delta$ denotes the Dirac measure, the discrete OT problem seeks an optimal transport plan $\mathbf{S}$ that minimizes the total transport cost as follows:
\vspace{-2pt}
\begin{equation}\label{eq:w-dist}
    \min_{\mathbf{S}\in\Pi(\boldsymbol{\mu}, \boldsymbol{\nu})}\left<\mathbf{C}, \mathbf{S}\right>
\end{equation}
\vspace{-2pt}
where $\Pi(\boldsymbol{\mu}, \boldsymbol{\nu}):=\left\{\mathbf{S}\in\mathbb{R}^{n\times m}_+|\mathbf{S}\mathbf{1}_m=\boldsymbol{\mu},\mathbf{S}^\top\mathbf{1}_n=\boldsymbol{\nu}\right\}$, and $\mathbf{C}\in\mathbb{R}^{n\times m}$ is the cost matrix measuring the cost of transporting mass from different units of supports. The optimal value of Eq.~\eqref{eq:w-dist} defines the \textit{Wasserstein distance} between $\boldsymbol{\mu}$ and $\boldsymbol{\nu}$ under the cost matrix $\mathbf{C}$, and the resulting transport plan encodes the soft correspondence between points from the two distributions.

To solve the Wasserstein OT formulation efficiently, \citep{peyre2019computational} introduces entropic regularization into Eq.~\eqref{eq:w-dist} to approximate the original OT formulation:
\begin{equation}\label{eq:ent-ot}
    \min_{\mathbf{S}\in\Pi(\boldsymbol{\mu}, \boldsymbol{\nu})}\left<\mathbf{C}, \mathbf{S}\right>-\epsilon\,H(\mathbf{S})
\end{equation}
where $H(\mathbf{S}):=-\sum_{i,j}\mathbf{S}_{i,j}\left(\log\mathbf{S}_{i,j}-1\right)$ and $\epsilon>0$ denotes the entropic regularization weight. Eq.~\eqref{eq:ent-ot} yields an $\epsilon$-strongly convex optimization problem solvable via the Sinkhorn algorithm with a quadratic complexity~\citep{nemirovski1999complexity}. Specifically, The optimal solution of Eq.~\eqref{eq:ent-ot} yields the following scaling form:
\vspace{-5pt}
\begin{equation}\label{eq:sink1}
    \mathbf{S}^* = \mathrm{diag}(\mathbf{a})\,\mathbf{K}\,\mathrm{diag}(\mathbf{b}),
\end{equation}
where $\mathbf{K} := \exp(-\mathbf{C}/\epsilon)$ is the Gibbs kernel, and $\boldsymbol{\mu}\in\mathbb{R}^n_+$, $\boldsymbol{\nu}\in\mathbb{R}^m_+$ are scaling vectors which enforce the marginal constraints. The Sinkhorn algorithm solves for $\boldsymbol{\mu}$ and $\boldsymbol{\nu}$ via iterative matrix scaling:
\begin{equation}\label{eq:sink2}
    \mathbf{a}^{(t+1)} = \boldsymbol{\mu} \oslash (\mathbf{K}\mathbf{b}^{(t)}), 
    \qquad
    \mathbf{b}^{(t+1)} = \boldsymbol{\nu} \oslash (\mathbf{K}^\top \mathbf{a}^{(t+1)}),
\end{equation}
where $\oslash$ denotes element-wise division.

\section{Methodology}\label{sec:method}

In this section, we present the proposed \algname. We first introduce a Wasserstein OT formulation which predicts image-text alignment robustly during inference, in Section~\ref{subsec:me-inf}. Then, we introduce the adopted soft-label InfoNCE during adaptation which explicitly models sample-level relationships via contrastive learning, in Section~\ref{subsec:me-adapt}. Finally, we theoretically unifies the objectives of \algname\ at the inference and adaptation stage, in Section~\ref{subsec:me-bridge}. An overview of \algname\ is shown in Figure~\ref{fig:model}.

\begin{figure*}[t]
  \includegraphics[width=\linewidth, trim=0 8 0 5, clip]{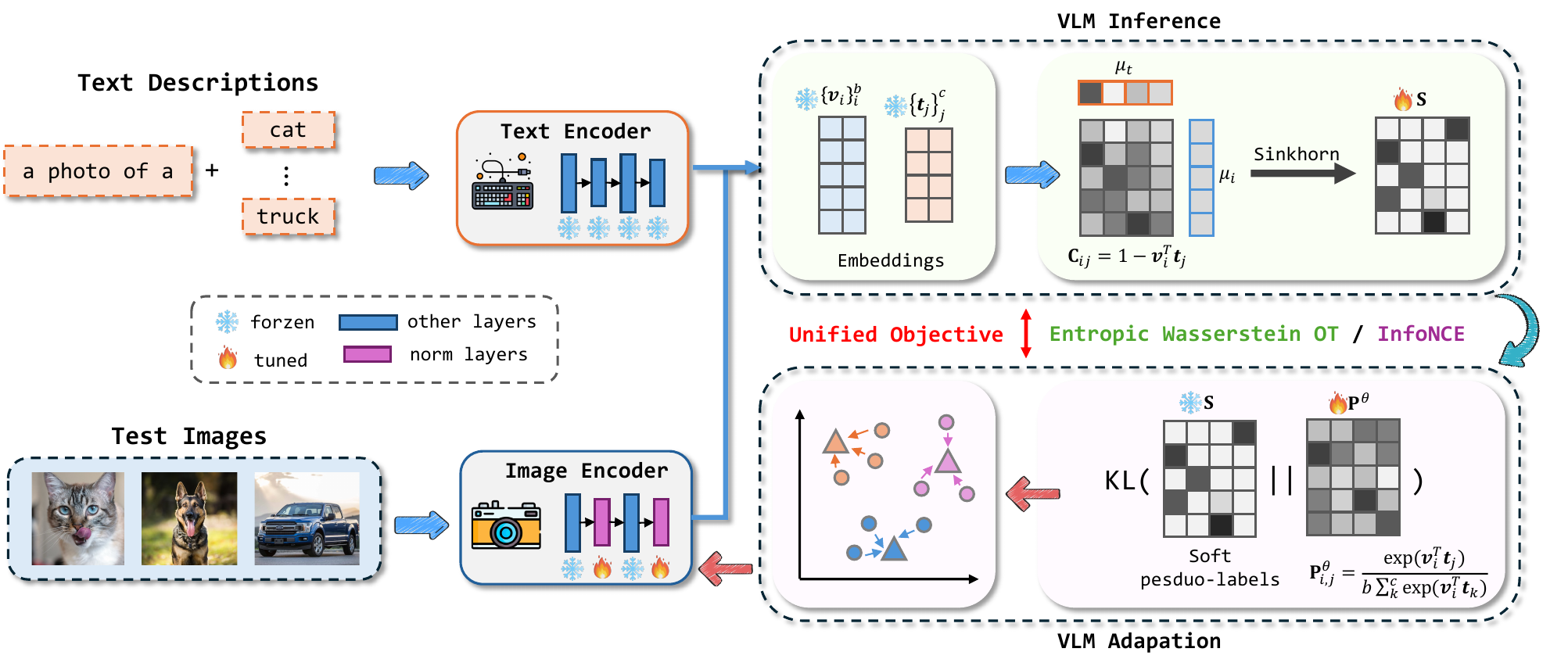}
  \caption{An overview of \algname. VLMs first encode test images via the image encoder and text descriptions of different classes via the text encoder. \textbf{During inference}, \algname\ infers zero-shot image classification by solving an entropic Wasserstein OT problem. \textbf{During adaptation}, \algname\ explicitly models the embedding relationships of shifted modalities at the sample-level via a soft-label InfoNCE loss, and optimizes normalization layers of the image encoder.}
  \vspace{-10pt}
  \label{fig:model}
\end{figure*}

\lc{How many norm layers, and how many tunable parameters total? Maybe we can add these details to the appendix for better reproducibility.}

\subsection{OT-based Inference}\label{subsec:me-inf}
Pre-trained VLMs are trained to project different modalities, i.e., images and texts, into the same embeddings space, making them directly comparable based on embedding similarities. However, the unified embedding space of VLMs are vulnerable under distribution shifts, making embedding similarities noisy to infer the actual semantic similarity between shifted images and texts. OT, in contrast, leverages a constrained optimization framework to produce noise-reduced similarity across different modalities~\citep{chen2020graph}. Therefore, at the inference stage of VLMs, we propose to use the raw embedding distances across shifted modalities as their transport cost at a distribution level, and instead infer image-text alignment by solving an entropic Wasserstein OT formulation in Eq.~\eqref{eq:inf-ot}.

Formally, given a batch of test images $\mathcal{V}:=\{v_i\}_{i=1}^b$ of size $b$ and a set of text descriptions $\mathcal{T}:=\{t_j\}_{j=1}^c$ of $c$ different text classes, we predict image-text alignment by solving the following entropic Wasserstein OT formulation:
\begin{equation}\label{eq:inf-ot}
    \min_{\mathbf{S}\in\Pi(\boldsymbol{\mu}_i, \boldsymbol{\mu}_t)}\left<\mathbf{C}, \mathbf{S}\right>-\epsilon\,H(\mathbf{S})\quad \mathbf{C}_{i,j}:=1-\mathbf{v}_i^\top\mathbf{t}_j
\end{equation}
where $\mathbf{v}_i=f_{\theta}(v_i)$, $\mathbf{t}_j=f_{\phi}(t_j)$. As we will show later in Section~\ref{subsec:me-bridge}, such cost design provides a unified view of VLM inference and adaption. $\boldsymbol{\mu}_i$ and $\boldsymbol{\mu}_t$ are two marginal distributions defined over the image batch and text classes respectively. We assume a uniform distribution at the image level, i.e., $\boldsymbol{\mu}_i:=\sum_i^b\frac{1}{b}\delta_{v_i}$, treating each test image with equal importance. For text-level distribution $\boldsymbol{\mu}_t$, we assume a uniform distribution before adaptation and dynamically adjust $\boldsymbol{\mu}_t$ according to the predictions of previous test batches by exponential moving average (EMA), i.e.,
\vspace{-3pt}
\begin{equation}\label{eq:ema}
    \boldsymbol{\mu}_t^{(k)}=\alpha\hat{\boldsymbol{\mu}}_t^{(k-1)}+(1-\alpha)\boldsymbol{\mu}_t^{(k-1)}\quad \boldsymbol{\mu}_t^{(0)}:=\sum_{i=1}^c\frac{1}{c}\delta_{t_i}
\end{equation}
where $\boldsymbol{\mu}_t^{(k)}$ denotes the text-level marginal distribution for OT, and $\hat{\boldsymbol{\mu}}_t^{(k-1)}$ denotes the predicted text label distribution at the ($k$-1)-th batch. $\alpha$ is the EMA parameter. For a test image $v_i$, its predicted label $\tilde{c}_i$ based on image-text alignment is inferred from the solved transport map $\mathbf{S}^*$, i.e., $\tilde{c}=\arg\max_j\mathbf{S}^*_{i,j}$. Eq.~\eqref{eq:inf-ot} yields an $\epsilon$-strongly convex optimization problem that can be solved more efficiently via the Sinkhorn algorithm~\citep{peyre2019computational}, as shown in Eq.~\eqref{eq:sink1} and Eq.~\eqref{eq:sink2}.

Instead of relying on noisy embedding similarities which are unreliable under distribution shifts, \algname\ infer image-text alignment from a global view by leveraging the constrained nature of OT~\citep{chen2020graph, zhu2025enhancing, yu2025joint}, therefore leading to more accurate predictions which produces robust and reliable pseudo-labels for VLM adaptation.

\vspace{-2pt}
\subsection{Fine-grained Adaptation}\label{subsec:me-adapt}
\vspace{-2pt}
Empowered by the robust predictions of test image labels via OT, \algname\ directly adopts a soft-label InfoNCE loss for image-to-text alignment as the optimization objective during adaptation, modeling cross-modal relationships explicitly at the same sample-level as VLM inference. Formally, for a batch of test image $\mathcal{V}:=\{v_i\}_i^b$ and a set of text descriptions $\mathcal{T}:=\{t_j\}_j^c$ of $c$ different text classes, \algname\ optimizes the following soft-label InfoNCE loss in Eq.~\eqref{eq:loss-adapt}.
\begin{equation}\label{eq:loss-adapt}
    \min_\theta \text{KL}(\mathbf{S}\|\tilde{\mathbf{P}}^{\theta}), \quad \tilde{\mathbf{P}}^{\theta}_{i,j}:=\frac{\exp(\mathbf{v}_i^\top\mathbf{t}_j/\tau)}{b\sum_{k=1}^c\exp(\mathbf{v}_i^\top\mathbf{t}_k/\tau)}
\end{equation}
where $\text{KL}(\cdot\|\cdot)$ denotes the KL divergence, $\mathbf{S}$ is the transport map of OT at the inference stage serving as soft pseudo-labels, and $\tau$ is a scalar temperature. Recall that the pre-training objective of CLIP consists of two hard-label InfoNCE losses which simultaneously conduct image-to-text and text-to-image alignment in the embedding space, as shown in Eq.~\eqref{eq:loss-pre}~\citep{shi2024ot}
\begin{equation}\label{eq:loss-pre}
    \min_\theta \text{KL}(\mathbf{P}\|\tilde{\mathbf{P}}^{\theta,\phi})+\text{KL}(\mathbf{P}\|\hat{\mathbf{P}}^{\theta,\phi})
\end{equation}
where $\mathbf{P}$ consists of one-hot vectors denoting hard alignment of paired images and texts, and $\tilde{\mathbf{P}}^{\theta,\phi}, \hat{\mathbf{P}}^{\theta,\phi}$ are defined as follows
\begin{equation}\label{eq:loss-pre-2}
    \tilde{\mathbf{P}}^{\theta,\phi}_{i,j}:=\frac{\exp(\mathbf{v}_i^\top\mathbf{t}_j/\tau)}{b\sum_{k=1}^c\exp(\mathbf{v}_i^\top\mathbf{t}_k/\tau)},\quad
    \hat{\mathbf{P}}^{\theta,\phi}_{i,j}:=\frac{\exp(\mathbf{v}_i^\top\mathbf{t}_j/\tau)}{c\sum_{k=1}^b\exp(\mathbf{v}_k^\top\mathbf{t}_j/\tau)}
\end{equation}

Comparing Eq.~\eqref{eq:loss-adapt} and Eqs.~\eqref{eq:loss-pre}-\eqref{eq:loss-pre-2} reveals that the adaptation objective of \algname\ essentially \textbf{1)} replaces ground-truth hard label of $\mathbf{P}$ with OT-induced soft pseduo-label $\mathbf{S}$ since test labels are unavailable during inference at test-time, and \textbf{2)} solely adapts image encoder and aligns shifted image embeddings to text embeddings since distribution shifts typically happens on the image side~\citep{bao2025mint, maharana2025batclip}. In this way, \algname\ aligns with the core philosophy of the pre-training of CLIP, which explicitly models embedding relationships of individual image-text pairs via contrastive learning, thus guiding the adaptation of VLMs effectively to enhance the its inference capability under distribution shifts.

While we only adapt the InfoNCE loss at image-to-text alignment level, our framework can be naturally extends to the full InfoNCE loss adopted for the pre-training of CLIP by adding the text-to-image alignment counterpart, e.g., under text-level distribution shifts.

\vspace{-2pt}
\subsection{Bridging Inference and Adaptation via OT}\label{subsec:me-bridge}
\vspace{-2pt}

In this section, we further bridge the inference and adaptation objectives of VLMs theoretically through the lens of OT. Specifically, as shown in the following lemma, the soft-label InfoNCE loss in Eq.~\eqref{eq:loss-adapt} is mathematically equivalent to the entropic Wasserstein OT formulation in Eq.~\eqref{eq:inf-ot}.
\begin{lemma}\label{lem:bridge}
Given $\mathbf{C}_{i,j}=1-\mathbf{v}_i^\top\mathbf{t}_j$, the term $\textnormal{KL}(\mathbf{S}\|\tilde{\mathbf{P}}^\theta)$ in Eq.~\eqref{eq:loss-adapt} can be formulated as follows
\begin{equation}\label{eq:bridge}
    \textnormal{KL}(\mathbf{S}\|\tilde{\mathbf{P}}^\theta)
    = \underbrace{\frac{1}{\tau}\left(\left<\mathbf{C}, \mathbf{S}\right>-\tau H(\mathbf{S})\right)}_{\text{entropic Wasserstein OT}}
    + \underbrace{1+\frac{1}{b}\sum_{i=1}^b \log \left[b\sum_{k=1}^c \exp\left(\frac{\mathbf{v}_i^\top\mathbf{t}_k - 1}{\tau}\right)\right]}_{\text{constant w.r.t. }\mathbf{S}}
\end{equation}
\end{lemma}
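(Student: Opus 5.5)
The plan is a direct algebraic expansion of the KL divergence. It uses only two facts: $\mathbf{S}$ lies in the transport polytope $\Pi(\boldsymbol{\mu}_i,\boldsymbol{\mu}_t)$ of Eq.~\eqref{eq:inf-ot}, and the image marginal is uniform, $\boldsymbol{\mu}_i=\frac{1}{b}\mathbf{1}_b$. Consequently $\sum_j \mathbf{S}_{i,j}=\frac1b$ for every $i$, and $\sum_{i,j}\mathbf{S}_{i,j}=1$. I will use the standard definition
\[
\textnormal{KL}(\mathbf{S}\|\tilde{\mathbf{P}}^\theta)=\sum_{i,j}\mathbf{S}_{i,j}\log\mathbf{S}_{i,j}-\sum_{i,j}\mathbf{S}_{i,j}\log\tilde{\mathbf{P}}^\theta_{i,j}
\]
and treat the two sums separately.

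\textbf{Entropy term.} From $H(\mathbf{S})=-\sum_{i,j}\mathbf{S}_{i,j}(\log\mathbf{S}_{i,j}-1)$ and total mass one, I get $\sum_{i,j}\mathbf{S}_{i,j}\log\mathbf{S}_{i,j}=1-H(\mathbf{S})$. This produces the $-H(\mathbf{S})$ inside the entropic OT bracket together with the additive constant $1$.

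\textbf{Cross term.} I write
\[
\log\tilde{\mathbf{P}}^\theta_{i,j}=\frac{\mathbf{v}_i^\top\mathbf{t}_j}{\tau}-\log\Big[b\sum_k\exp(\mathbf{v}_i^\top\mathbf{t}_k/\tau)\Big]
\]
and substitute $\mathbf{v}_i^\top\mathbf{t}_j=1-\mathbf{C}_{i,j}$. The first piece then gives
\[
-\sum_{i,j}\mathbf{S}_{i,j}\,\frac{1-\mathbf{C}_{i,j}}{\tau}=\frac1\tau\langle\mathbf{C},\mathbf{S}\rangle-\frac1\tau .
\]
The log-normalizer does not depend on $j$. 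Summing over $j$ therefore uses the row marginals and gives $\frac1b\sum_i\log[b\sum_k\exp(\mathbf{v}_i^\top\mathbf{t}_k/\tau)]$. Inside this logarithm I factor $e^{1/\tau}$ out of the sum, rewriting $\exp(\mathbf{v}_i^\top\mathbf{t}_k/\tau)=e^{1/\tau}\exp((\mathbf{v}_i^\top\mathbf{t}_k-1)/\tau)$. This contributes $\frac1b\cdot b\cdot\frac1\tau=\frac1\tau$, which exactly cancels the $-\frac1\tau$ from the first piece. What remains is $\frac1b\sum_i\log[b\sum_k\exp((\mathbf{v}_i^\top\mathbf{t}_k-1)/\tau)]$.

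\textbf{Assembly.} Adding the two parts gives $\frac1\tau\langle\mathbf{C},\mathbf{S}\rangle-H(\mathbf{S})=\frac1\tau(\langle\mathbf{C},\mathbf{S}\rangle-\tau H(\mathbf{S}))$, plus the constant $1+\frac1b\sum_i\log[\cdots]$. This constant depends on $\theta$ through the embeddings but not on $\mathbf{S}$, which is what Eq.~\eqref{eq:bridge} requires.

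The only delicate point is bookkeeping of constants. The $1/\tau$ terms coming from the cost shift must cancel, and the ``$-1$'' in the paper's definition of $H$ must be what produces the additive $1$. Both cancellations rely on the marginal constraints: the row sums being $1/b$ is what matches the $b$ in the denominator of $\tilde{\mathbf{P}}^\theta$. I will therefore state explicitly that $\mathbf{S}\in\Pi(\boldsymbol{\mu}_i,\boldsymbol{\mu}_t)$ with uniform $\boldsymbol{\mu}_i$ before starting the computation.
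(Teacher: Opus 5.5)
Your proposal is correct and follows the paper's proof essentially line by line. Like the paper, you expand the KL, use $\sum_{i,j}\mathbf{S}_{i,j}\log\mathbf{S}_{i,j}=1-H(\mathbf{S})$, substitute $\mathbf{v}_i^\top\mathbf{t}_j=1-\mathbf{C}_{i,j}$, and use the uniform row marginals $\sum_j\mathbf{S}_{i,j}=\frac1b$ to collapse the log-normalizer into $\frac1b\sum_i\log[\cdots]$. The only difference is cosmetic: the paper pulls out $\log b$ and $-\frac1\tau$ separately before folding them back into the logarithm, whereas you keep $b$ inside and cancel the $\frac1\tau$ terms directly.
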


Detailed proof of Lemma~\ref{lem:bridge} can be found in Appendix~\ref{subsec:proof_bridge}. We can see from Eq.~\eqref{eq:bridge} that $\text{KL}(\mathbf{S}\|\tilde{\mathbf{P}}^\theta)$ can be rewritten into a entropic Wasserstein OT term plus a constant w.r.t. the transport plan $\mathbf{S}$ which does not affect the optimization of $\mathbf{S}$. Therefore, optimizing Eq.~\eqref{eq:bridge} w.r.t. the transport plan $\mathbf{S}$ is equivalent to optimizing Eq.~\eqref{eq:inf-ot}, i.e., 
\begin{equation}
    \min_{\mathbf{S}\in\Pi(\boldsymbol{\mu}_i,\boldsymbol{\mu}_t)}\text{KL}(\mathbf{S}\|\tilde{\mathbf{P}}^\theta):= \min_{\mathbf{S}\in\Pi(\boldsymbol{\mu}_i,\boldsymbol{\mu}_t)} \left<\mathbf{C}, \mathbf{S}\right>-\tau H(\mathbf{S})
\end{equation}
with the scalar temperature $\tau$ in Eq.~\eqref{eq:loss-adapt} becoming the entropic regularization weight $\epsilon$ in Eq.~\eqref{eq:inf-ot}, i.e., $\epsilon:=\tau$. In this way, we theoretically prove that the inference and adaptation of VLMs by the proposed \algname\ share the same objective functions w.r.t. different optimized variables $\mathbf{S}$ and the parameter $\theta$ of the image encoder, implying that these two processes are mutually beneficial instead of decoupled from each other. For one thing (\textit{inference for adaptation}), OT estimate robust soft pseudo-labels at the sample-level which guides the adaptation of VLMs effectively; for another (\textit{adaptation for inference}), the InfoNCE loss aligns with the pre-training philosophy of CLIP, which explicitly models sample-level relationships across modalities and effectively guides the re-alignment of image and text embeddings under distribution shift, generating accurate predictions for improved inference. By alternate between optimizations of $\mathbf{S}$ and $\theta$, \algname\ achieves mutual benefits of OT-based inference and InfoNCE-based adaptation, leading to effective TTA of VLMs. Formally, the unified objective of inference and adaptation in \algname\ at test time is introduced as follows.
\begin{equation}\label{eq:unified}
    \min_{\theta, \mathbf{S}\in\Pi(\boldsymbol{\mu}_i,\boldsymbol{\mu}_t)}\text{KL}(\mathbf{S}\|\tilde{\mathbf{P}}^\theta)\quad \tilde{\mathbf{P}}^{\theta}_{i,j}:=\frac{\exp(\mathbf{v}_i^\top\mathbf{t}_j/\tau)}{b\sum_{k=1}^c\exp(\mathbf{v}_i^\top\mathbf{t}_k/\tau)}
\end{equation}
Note that the soft-label InfoNCE loss on the text side can also be unified with a entropic Wasserstein OT formulation in a similar fashion, making our unified theory of inference and adaptation of VLMs applicable to both adaptation settings under image and text-level distribution shifts.

\vspace{-5pt}
\section{Experiments}\label{sec:exp}

In this section, we carry out comprehensive experiments and analyses to evaluate the proposed \algname\ from the
following aspects:

\begin{itemize}[nosep, wide=0pt, leftmargin=*, after=\strut, label=\textbullet]
    \item \textbf{Q1:} How effective is \algname\ compared to state-of-the-art methods under distribution shifts?
    \item \textbf{Q2:} How does \algname\ trade off effectiveness and efficiency?
    \item \textbf{Q3:} How does inference and adaptation of \algname\ benefit each other under a unified OT view?
    \item \textbf{Q4:} How sensitive is \algname\ to different hyperparameter and design choices?
\end{itemize}

Due to limited space, we include detailed experimental setup in Appendix~\ref{sec:exp_setup}, including introductions of datasets \& metrics, VLM backbones, TTA baselines, reproducibility details, and machines used for conducting all experiments.


\begin{table*}[t]
\centering
\setlength\tabcolsep{2pt}
\caption{Benchmarking results with ViT-B-16. The \textbf{1st}/\underline{2nd} best results are highlighted in \textbf{bold} and \underline{underline}, respectively. Additional benchmarking results with ViT-B-32 are shown in Table~\ref{tab:exp_bench_vit_b32}}
\label{tab:exp_bench_vit_b16}
\resizebox{\textwidth}{!}{
\begin{tabular}{clccccccccccccccccc}
\toprule
\multirow{2}{*}{Dataset} & \multirow{2}{*}{Method}
& \multicolumn{3}{c}{Noise}
& \multicolumn{4}{c}{Blur}
& \multicolumn{4}{c}{Weather}
& \multicolumn{4}{c}{Digital}
& \multirow{2}{*}{Mean} \\
\cmidrule(lr){3-5}
\cmidrule(lr){6-9}
\cmidrule(lr){10-13}
\cmidrule(lr){14-17}
& & Gauss. & Shot & Impul.
& Defoc. & Glass & Motion & Zoom
& Snow & Frost & Fog & Brit.
& Contr. & Elast. & Pixel. & JPEG
& \\
\midrule

\multirow{14}{*}{\rotatebox{90}{CIFAR10-C}} 
& Source	& 37.7 & 42.0 & 54.2 & 71.8 & 40.9 & 68.1 & 74.0 & 74.1 & 77.1 & 70.6 & 84.0 & 62.7 & 53.6 & 47.2 & 59.0 & 61.2 \\
& TDA	& 40.9 & 44.0 & 50.5 & 72.3 & 44.6 & 71.7 & 75.6 & 75.5 & 78.2 & 71.6 & 85.7 & 62.4 & 56.3 & 48.3 & 58.1 & 61.9 \\
& DMN	& 43.0 & 47.4 & 53.8 & 73.5 & 40.0 & 71.9 & 76.3 & 74.5 & 76.2 & 71.0 & 84.2 & 60.2 & 54.4 & 47.2 & 57.0 & 62.4 \\
& VTE	& 42.8 & 46.7 & 63.9 & 71.0 & 45.6 & 69.0 & 73.6 & 76.7 & 78.5 & 71.0 & 85.7 & 57.5 & 59.5 & 60.8 & 61.7 & 64.3 \\
& ZERO	& 38.7 & 43.9 & 57.3 & 71.7 & 40.9 & 69.0 & 74.5 & 74.5 & 77.6 & 72.7 & 84.1 & 60.3 & 55.5 & 48.6 & 61.8 & 61.6 \\
& ECALP	& 46.5 & 50.8 & 61.1 & 71.3 & 42.0 & 70.8 & 74.9 & 75.9 & 79.0 & 71.1 & 86.3 & 60.8 & 57.9 & 49.5 & 61.4 & 64.2 \\
& TENT	& 15.2 & 18.0 & 38.2 & \underline{81.7} & 21.3 & 76.3 & \underline{82.2} & \underline{84.0} & 81.8 & \underline{80.4} & \underline{90.2} & 80.2 & 63.4 & 58.5 & 56.5 & 61.4 \\
& RoTTA	& 39.2 & 43.1 & 55.4 & 71.8 & 41.3 & 68.3 & 74.0 & 74.3 & 77.8 & 70.9 & 85.0 & 63.4 & 54.2 & 49.1 & 60.3 & 62.3 \\
& TPT	& 38.1 & 42.4 & 60.7 & 73.3 & 44.9 & 69.5 & 75.8 & 76.1 & 78.5 & 71.9 & 85.6 & 62.4 & 58.9 & 55.2 & 62.5 & 64.0 \\
& MEMO	& 38.1 & 41.5 & 55.4 & 72.1 & 41.1 & 68.9 & 73.9 & 75.1 & 77.7 & 71.7 & 84.5 & 62.0 & 56.0 & 49.0 & 61.3 & 62.3 \\
& WATT	& 46.3 & 53.3 & 59.9 & 75.2 & 38.3 & 71.9 & 76.2 & 77.3 & 80.7 & 76.1 & 87.9 & 75.7 & 55.1 & 62.4 & 63.2 & 66.7 \\
& MINT	& 54.3 & 58.5 & 65.0 & 77.0 & 49.0 & 77.9 & 79.0 & 82.0 & 81.3 & 77.2 & 89.5 & 74.3 & 61.7 & 61.4 & 64.3 & 70.0 \\
& BATCLIP	& \underline{62.3} & \underline{65.0} & \underline{66.6} & 79.8 & \underline{55.9} & \underline{80.4} & 81.6 & 82.0 & \underline{83.9} & 80.4 & 88.5 & \underline{81.5} & \underline{69.3} & \underline{63.1} & \underline{67.7} & \underline{73.5} \\
& \algname\	& \textbf{71.8} & \textbf{74.3} & \textbf{77.7} & \textbf{84.1} & \textbf{70.5} & \textbf{84.1} & \textbf{85.4} & \textbf{86.6} & \textbf{86.5} & \textbf{87.1} & \textbf{90.9} & \textbf{89.3} & \textbf{76.9} & \textbf{81.1} & \textbf{74.7} & \textbf{80.9} \\

\midrule

\multirow{14}{*}{\rotatebox{90}{CIFAR100-C}} 
& Source	& 19.8 & 21.3 & 24.9 & 42.7 & 20.3 & 43.5 & 48.3 & 48.4 & 49.7 & 41.4 & 56.7 & 34.5 & 29.1 & 23.8 & 32.4 & 35.3 \\
& TDA	& 22.4 & 25.5 & 29.3 & 43.1 & 19.4 & 43.5 & 49.1 & 48.1 & 50.7 & 41.5 & 58.4 & 35.2 & 29.0 & 24.4 & 32.9 & 36.5 \\
& DMN	& 21.9 & 25.4 & 25.1 & 41.1 & 16.3 & 43.4 & 49.1 & 45.6 & 48.4 & 39.8 & 57.9 & 32.4 & 25.9 & 23.4 & 30.8 & 35.1 \\
& VTE	& 18.0 & 18.9 & 27.9 & 39.9 & 19.5 & 39.5 & 45.3 & 48.6 & 46.6 & 40.5 & 55.0 & 30.1 & 32.2 & 30.4 & 31.3 & 34.6 \\
& ZERO	& 19.0 & 20.6 & 28.6 & 43.7 & 19.4 & 43.1 & 49.1 & 48.8 & 50.0 & 44.4 & 57.7 & 35.0 & 31.2 & 25.0 & 33.6 & 36.8 \\
& ECALP	& 23.4 & 25.3 & 30.5 & 43.0 & 19.5 & 42.8 & 49.6 & 47.8 & 50.3 & 42.7 & 58.0 & 35.1 & 30.0 & 25.4 & 31.9 & 37.0 \\
& TENT	& 7.4 & 7.7 & 8.8 & \textbf{51.8} & 8.3 & \textbf{52.2} & 53.8 & 52.5 & 36.7 & 48.0 & \underline{63.1} & \underline{52.9} & \underline{36.5} & \underline{39.5} & \underline{38.2} & 36.7 \\
& RoTTA	& 20.9 & 22.6 & 26.5 & 42.8 & 20.4 & 43.3 & 48.2 & 49.1 & 49.8 & 41.5 & 57.7 & 34.5 & 28.8 & 25.5 & 33.3 & 35.8 \\
& TPT	& 18.3 & 19.5 & 27.6 & 43.6 & 19.6 & 42.3 & 48.2 & 48.8 & 49.1 & 42.0 & 57.6 & 33.1 & 31.2 & 27.3 & 32.9 & 36.3 \\
& MEMO	& 19.8 & 22.4 & 27.6 & 43.6 & 20.0 & 44.4 & 49.1 & 50.5 & 50.0 & 43.4 & 58.9 & 34.6 & 30.4 & 25.4 & 33.7 & 36.7 \\
& WATT	& 25.8 & 27.3 & 31.8 & 48.4 & 23.4 & 48.9 & 52.7 & \underline{53.3} & 52.2 & 48.1 & 62.9 & 45.6 & 35.4 & 37.0 & 38.1 & 42.0 \\
& MINT	& \underline{27.9} & \underline{30.4} & \underline{35.8} & 48.9 & 26.1 & 47.3 & 53.3 & 52.6 & \underline{52.5} & \underline{48.4} & \textbf{64.4} & 43.9 & 35.2 & 32.6 & 35.2 & 42.2 \\
& BATCLIP	& 25.5 & 28.4 & 34.2 & 50.0 & \underline{26.6} & 48.7 & \textbf{55.2} & 51.9 & 51.5 & 48.3 & 62.3 & 45.9 & 34.4 & 33.0 & 36.8 & \underline{42.5} \\
& \algname\	& \textbf{36.9} & \textbf{39.1} & \textbf{44.7} & \underline{50.6} & \textbf{36.2} & \underline{51.4} & \underline{54.4} & \textbf{53.9} & \textbf{53.0} & \textbf{52.0} & 61.7 & \textbf{53.2} & \textbf{41.4} & \textbf{45.8} & \textbf{42.3} & \textbf{47.3} \\

\midrule

\multirow{14}{*}{\rotatebox{90}{ImageNet-C}} 
& Source	& 11.4 & 12.9 & 11.9 & 23.3 & 15.4 & 24.9 & 22.2 & 32.0 & 29.9 & 35.6 & 54.4 & 17.3 & 12.7 & 30.9 & 33.5 & 24.3 \\
& TDA	& 12.3 & 14.4 & 14.8 & 24.2 & 16.8 & 26.4 & 23.7 & 33.1 & \underline{32.5} & 38.5 & 55.2 & 19.3 & 14.5 & 33.9 & 34.8 & 26.7 \\
& DMN	& 11.5 & 13.9 & 14.4 & 22.3 & 16.2 & 24.4 & 22.7 & 31.7 & 30.8 & 35.8 & 53.9 & 16.0 & 12.9 & 32.7 & 25.3 & 24.0 \\
& VTE	& 9.2 & 10.8 & 11.0 & 24.4 & 14.6 & 24.8 & 25.3 & 35.1 & 32.1 & 37.9 & 55.1 & 15.9 & 14.7 & \textbf{38.8} & 33.9 & 25.4 \\
& ZERO	& 10.3 & 11.4 & 11.3 & 24.7 & 14.6 & 24.3 & 22.8 & 32.7 & 30.4 & 36.7 & 54.5 & 17.2 & 13.4 & 34.4 & 32.5 & 24.4 \\
& ECALP	& 13.3 & 15.2 & 13.6 & 22.1 & 15.1 & 25.6 & 23.4 & 30.8 & 30.2 & 35.6 & 51.9 & 18.5 & 13.1 & 33.6 & 33.6 & 25.4 \\
& TENT	& 5.5 & 5.3 & 7.9 & 25.5 & 19.5 & 26.8 & 24.3 & 33.8 & 30.8 & 38.3 & 54.6 & 22.6 & 13.9 & 34.8 & 35.6 & 25.2 \\
& RoTTA	& 10.9 & 12.7 & 12.5 & 23.6 & 15.8 & 24.9 & 22.4 & 32.1 & 29.6 & 36.2 & 53.9 & 17.2 & 12.6 & 31.1 & 33.1 & 25.1 \\
& TPT	& 8.1 & 9.4 & 10.4 & 23.6 & 15.2 & 24.9 & 24.4 & 34.4 & 32.3 & 36.9 & 55.1 & 16.1 & 14.2 & 34.1 & 33.8 & 24.7 \\
& MEMO	& 10.6 & 12.0 & 11.9 & 23.7 & 15.5 & 24.7 & 23.2 & 32.9 & 29.8 & 35.9 & 54.3 & 17.3 & 12.3 & 31.6 & 33.0 & 25.1 \\
& WATT	& 11.4 & 13.1 & 13.3 & 25.6 & 18.4 & 26.8 & 25.0 & 33.4 & 29.8 & 37.9 & 53.9 & 21.4 & 15.4 & 33.2 & 34.8 & 25.9 \\
& MINT	& 19.7 & 19.9 & 19.2 & \underline{26.5} & \underline{21.6} & 29.6 & 25.6 & 32.3 & 29.3 & 39.3 & \underline{55.3} & 24.0 & 18.8 & 36.2 & \textbf{38.2} & 29.3 \\
& BATCLIP	& \underline{19.7} & \underline{20.9} & \underline{19.4} & 26.1 & 21.1 & \underline{30.4} & \underline{28.8} & \underline{35.4} & 31.2 & \underline{40.5} & \textbf{55.6} & \underline{26.0} & \underline{24.1} & 36.9 & 37.4 & \underline{30.5} \\
& \algname\	& \textbf{23.0} & \textbf{24.3} & \textbf{23.7} & \textbf{27.6} & \textbf{26.6} & \textbf{31.3} & \textbf{31.4} & \textbf{38.0} & \textbf{34.1} & \textbf{40.6} & 50.2 & \textbf{32.3} & \textbf{31.5} & \underline{38.4} & \underline{38.0} & \textbf{32.9} \\

\bottomrule
\end{tabular}
}
\vspace{-10pt}
\end{table*}

\vspace{-2pt}
\subsection{Benchmarking Results}
\vspace{-2pt}

We first benchmark \algname\ with ViT-B-16 visual encoder on three standard VLM TTA benchmarks, whose results are reported in Table~\ref{tab:exp_bench_vit_b16}. We observe that \textbf{(1)} \textbf{The proposed \algname\ consistently achieve state-of-the-art performance across all three benchmarks.} Specifically, on datasets with low-resolution images, \algname\ outperforms the best competitor by a large margin, with a 7.4\% and 4.8\% improvement in mean accuracy on CIFAR10-C and CIFAR100-C, respectively. For the larger-scale ImageNet-C dataset, \algname\ also achieve the best mean accuracy with at least 2.4\% performance improvement, demonstrating the effectiveness of \algname. \textbf{(2)} \textbf{OT inherently produces accurate predictions without adapting VLM parameters.} Comparing the performance of OT inference \textit{without} adaptation in Table~\ref{tab:exp_ablation}, with that of training-free baselines (TDA, DMN, VTE, ZERO, ECALP, and TENT), shows that OT inference alone outperforms all training-free heuristics in mean accuracy on CIFAR10-C, revealing that OT itself can infer accurate predictions, thus generating robust pseudo-labels that leads to effective adaptation. \textbf{(3) The InfoNCE loss re-aligns image and text embeddings of the same class effectively under distribution shifts.} As shown in Figure~\ref{fig:exp_tsne}, the proposed \algname\ produces image embeddings that not only align closely with the text embeddings of the same class, but also show clear clustering structures in the embedding space. In contrast, coarse-grained surrogate loss functions of other baselines leads to entangled image embeddings which deviated from their corresponding text embeddings, harming the adaptation performance.

\begin{figure}[t]
  \includegraphics[width=\linewidth, trim=0 3 0 3, clip]{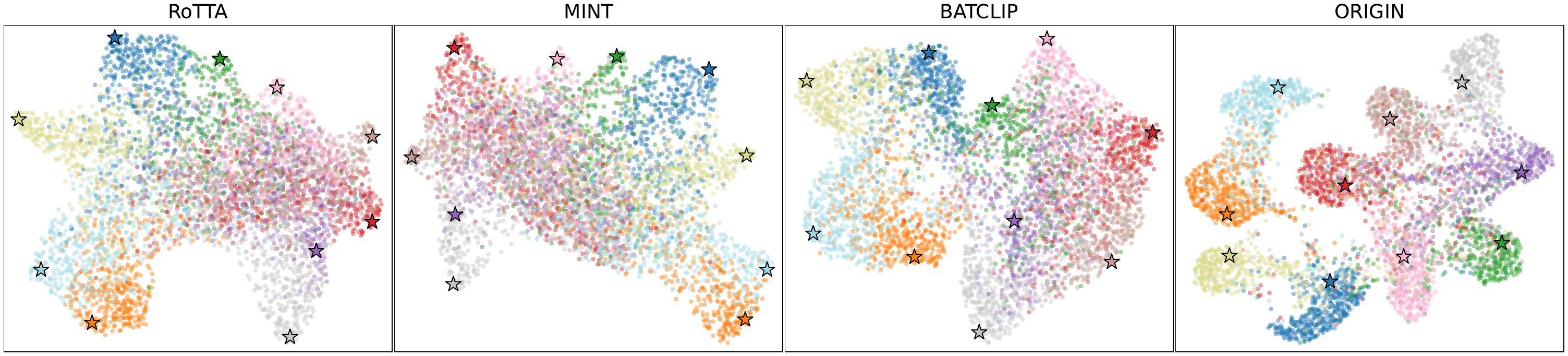}
  \caption{Embedding visualization of different VLM TTA methods by t-SNE on CIFAR10-C. Stars denote text embeddings of different classes, and round points denote image embeddings.}
  \vspace{-10pt}
  \label{fig:exp_tsne}
\end{figure}

\subsection{Efficiency Results}
We study the effectiveness-efficiency trade-off of the proposed \algname\ and other training-based TTA methods, including RoTTA, TPT, MEMO, WATT, MINT and BATCLIP, on CIFAR10-C and CIFAR100-C. The results on mean accuracy vs. average runtime per corruption, defined as the averaged total amount of time required for inference and adaptation over different corruption types, are shown in Figure~\ref{fig:exp_acc_vs_eff}. We can see that \textbf{\algname\ achieve state-of-the-art effectiveness and efficiency simultaneously}, with the corresponding data points lying consistently on the upper-left region of both scatter plots. Specifically, \algname\ outperforms the fastest baselines with up to 7.4\% improvement in mean accuracy without additional latency. Compared with the remaining training-based methods, \algname\ consistently outperforms all of them with up to 45$\times$ speed-up in runtime. This demonstrates that the additional overheads of solving the entropic Wasserstein OT during inference are practically negligible compared to the adaption of VLM parameters, which requires expensive forward and backward propagation, making \algname\ surprisingly efficient. 

\begin{minipage}[t]{0.5\linewidth}
    \vspace{0pt}
    \centering
    \includegraphics[width=\linewidth, trim=0 8 0 5, clip]{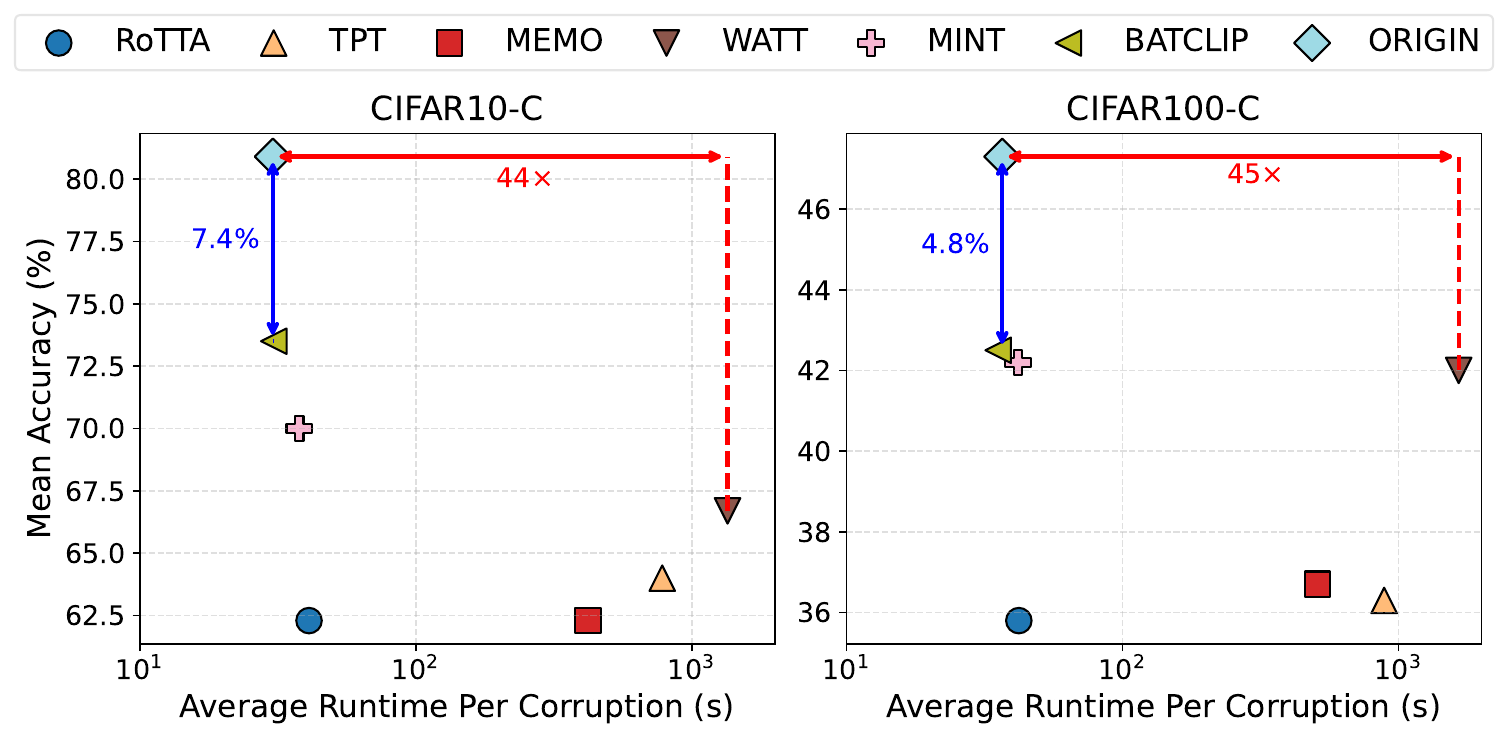}
    \vspace{-15pt}
    \captionof{figure}{Accuracy vs. runtime trade-off of \algname and other training-based TTA methods, showing that the proposed \algname\ achieves SOTA effectiveness and efficiency at the same time.}
    \label{fig:exp_acc_vs_eff}
\end{minipage}
\hfill
\begin{minipage}[t]{0.48\linewidth}
    \vspace{0pt}
    \captionof{table}{Accuracy (\%) of different variants of \algname\ on CIFAR10-C.}
    \centering
    \setlength\tabcolsep{5pt}
    \label{tab:exp_ablation}
    \resizebox{\linewidth}{!}{
    \begin{tabular}{lccccc}
        \toprule
        Dataset & \multicolumn{5}{c}{CIFAR10-C} \\
        \cmidrule(lr){1-1} \cmidrule(lr){2-6} 
        Corruption & Noise & Blur & Weather & Digital & Mean \\
        \midrule

        \textsc{Emb} & 44.69 & 63.54 & 76.47 & 55.83 & 60.13 \\
        \textsc{ot} & 54.18 & 65.83 & 79.10 & 62.40 & 65.38 \\
        \midrule
        \textsc{Emb+M} & 59.27 & 70.73 & 82.50 & 65.43 & 69.48  \\
        \textsc{Emb+B} & 62.94 & 73.40 & 83.61 & 68.99 & 72.24  \\
        \textsc{Emb+I} & 37.99 & 71.28 & 83.56 & 67.33 & 65.04  \\
        \midrule
        \textsc{ot+M} & 62.31 & 71.89 & 82.95 & 67.82 & 71.24   \\
        \textsc{ot+B} & 65.51 & 74.77 & 83.46 & 71.93 & 73.92  \\
        \algname\ & 74.43 & 81.05 & 87.60 & 80.44 & 80.88 \\
        
        \bottomrule
    \end{tabular}
    }
\end{minipage}

\subsection{Further Studies}

\subsubsection{Mutual Benefit of Inference and Adaptation}

To isolate the individual effect of OT and InfoNCE loss at the inference and adaptation stage, and verify that they are mutually beneficial for VLM TTA, we compare the performance of \algname\ against the following variants on CIFAR10-C:
\begin{itemize}[nosep, wide=0pt, leftmargin=*, after=\strut, label=\textbullet]
    \item \textsc{Emb} / \textsc{ot}: Embedding-based / OT-based inference \textit{without} adaptation.
    \item \textsc{Emb+M}: embedding-based inference with the adaption loss of MINT~\citep{bao2025mint}.
    \item \textsc{Emb+B}: embedding-based inference with the adaption loss of BATCLIP~\citep{maharana2025batclip}.
    \item \textsc{Emb+I}: embedding-based inference with the InfoNCE loss.
    \item \textsc{ot+I}: OT-based inference with the adaption loss of MINT~\citep{bao2025mint}.
    \item \textsc{ot+B}: OT-based inference with the adaption loss of BATCLIP~\citep{maharana2025batclip}.
    \item \algname: OT-based inference with the InfoNCE loss.
\end{itemize}
The results are shown in Tables~\ref{tab:exp_ablation}. \textbf{(1) OT-based inference predict reliable pseudo-labels which leads to consistent performance improvement.} Specifically, comparing the first three rows of Tables~\ref{tab:exp_ablation} with the last three rows reveals that OT-based inference consistently outperforms embedding-based inference under the same adaption loss, with an average performance improvement of 1.78\%, 1.68\%, and 15.84\%. This demonstrates that OT consistently infer robust pseudo-labels during adaptation which leads to effective adaptation at test-time. \textbf{(2) InfoNCE loss improves the performance upperbound of TTA.} While class-level surrogate loss functions of MINT and BATCLIP sometimes avoid noise amplification inside pseudo-labels, the InfoNCE loss contributes to a higher performance upper-bound with an average performance improvement of 9.64\% over MINT and 6.96\% over BATCLIP when supervised by OT-induced pseudo-labels, demonstrating the necessity of explicit modeling of sample-level relationships across different modalities. \textbf{(3) OT-based inference and the InfoNCE-based adaptation are mutually beneficial under a unified formulation.} The proposed \algname, which theortically bridges the inference and adaptation objectives,  consistently achieves the best performance, demonstrating that OT-based inference and InfoNCE-based adaption inherently benefit each other.

\subsubsection{Hyperparameter Analysis}

We study the sensitivity of \algname\ to three different hyperparameters on CIFAR10-C: the temperature $\tau$ in Eq.~\eqref{eq:loss-adapt} (i.e., the entropic regularization weight $\epsilon$ in Eq.~\eqref{eq:inf-ot}), the EMA parameter $\alpha$ in Eq.~\eqref{eq:ema}, and the batch size $b$. \textbf{In general, the performance of \algname\ remains stable under different choices of hyperparameters with consistent outperformance than its best competitors}. Detailed analysis for each parameter are provided as follows.
\vspace{-5pt}
\paragraph{Temperature $\tau$.} We find that overly small or large $\tau$ typically leads to a slight performance degradation for \algname. A small $\tau$ sharpens the softmax distribution and enforces deterministic OT prediction, but could potentially introduces errors due to overconfidence. A large $\tau$ avoids overly confident prediction, but leads to noisy supervision signals which limits the adaptation performance.
\vspace{-5pt}
\paragraph{Temperature $\alpha$.} We can see that \algname\ remain insensitive to different choices of $\alpha$ that trade-off historical and current text-level distribution, with less than 1\% performance variation.
\vspace{-5pt}
\paragraph{Batch size $b$.} The influence of batch size $b$ shares similar features as that of $\tau$, where overly small or large $b$ leads to performance degradation. Small $b$ allows fine-grained adaption but provides limited global information for OT-based inference per batch. By contrast, large $b$ leads to more accurate OT-based inference on a batch level but allows limited steps for adaption. 

\begin{figure}[t]
  \includegraphics[width=\linewidth, trim=0 7 0 7, clip]{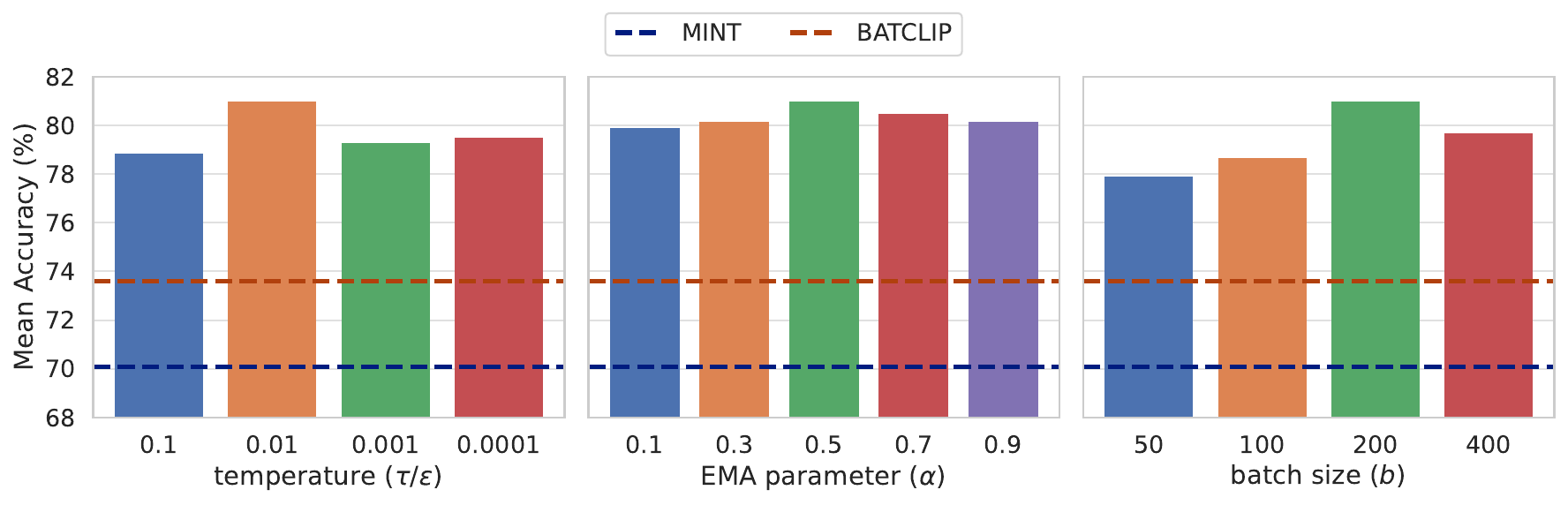}
  \caption{Hyperparameter sensitivity study of \algname\ on CIFAR10-C}
  \vspace{-10pt}
  \label{fig:exp_param}
  \vspace{-5pt}
\end{figure}

\section{Related Works}\label{sec:related}
\paragraph{VLM Test-time Adaptation.} 
VLM TTA adapts pre-trained VLMs like CLIP~\citep{maharana2025batclip} from the source domain to target domain featured by distribution shifts at test time, without access to source training data or test labels. 
Early TTA methods for VLMs focus primarily on prompt optimization~\citep{shu2022tpt, xiao2025dynaprompt}, memory management for historical samples~\citep{karmanov2024efficient, zhang2024dual, li2024efficient}, or data augmentation \& selection ~\citep{dobler2024lost}, without adapting parameters inside different encoders of VLMs. While efficient, their performance are inherently limited without parameter adaptation. More recent efforts attempt to adopts normalization layers of the encoders inside VLMs by crafting coarse-grained surrogate loss functions. Specifically, BATCLIP~\citep{maharana2025batclip} adapts both the image and text encoders of CLIP by simultaneously separating image prototypes and aligning image-text prototypes. MINT~\citep{bao2025mint} encourage inter-class separability based on averaged image embeddings. However, they overly relies on noisy-pseudo labels to provide coarse-grained supervision and fails to explicitly model cross-modal relationship at sample-level, leading to suboptimal adaptation performance.

\paragraph{Optimal Transport for VLMs.} OT has attracted increasing attention in multimodal learning and vision-language models. PLOT~\citep{chen2022plot} and Prompt-OT~\citep{chen2026prompt} leverages OT for prompt learning to improve VLMs on downstream vision-language tasks. SWAB~\citep{yi2024bridge} selects suitable VLMs for zero-shot classification by bridging the statistical gap across the open-source and target datasets via OT. OT-CLIP~\citep{shi2024ot} leverages different variants of OT to improve the pretraining performance of CLIP, while OTCCLIP~\citep{zhang2025pre} improves the robustness of CLIP against data poisoning with OT-based alignment at pretraining. While OT has been widely adopted for VLMs, none of existing work bridges the inference and adaptation of VLMs at test time through the lens of OT from a theoretical perspective.

\section{Conclusion}\label{sec:con}
In this paper, we study the VLM TTA problem by theoretically bridging the inference and adaptation of VLMs at test time through the lens of optimal transport to achieve their mutual benefits. To improve the robustness of pseudo-labels during inference, we reformulate the inference of VLMs into a Wasserstein OT formulation, generating reliable pseudo-labels at sample-level for effective adaptation. To leverage fine-grained supervision signals during adaptation, we adopt a soft-label InfoNCE loss to explicitly model cross-modal relationships at the sample-level via contrastive learning, effectively improving adaptation performance which further empowers robust inference at the same granularity. Finally, we theoretically unified the inference and adaptation objectives of VLMs, demonstrating that they inherently benefit each other rather than being decoupled stages. Extensive experiments demonstrate that \algname\ significantly outperforms the best competitor in mean accuracy with state-of-the-art efficiency.

\begin{ack}
Use unnumbered first level headings for the acknowledgments. All acknowledgments
go at the end of the paper before the list of references. Moreover, you are required to declare
funding (financial activities supporting the submitted work) and competing interests (related financial activities outside the submitted work).
More information about this disclosure can be found at: \url{https://neurips.cc/Conferences/2026/PaperInformation/FundingDisclosure}.

Do {\bf not} include this section in the anonymized submission, only in the final paper. You can use the \texttt{ack} environment provided in the style file to automatically hide this section in the anonymized submission.
\end{ack}

\bibliography{sections/reference}

@inproceedings{radford2021learning,
  title={Learning transferable visual models from natural language supervision},
  author={Radford, Alec and Kim, Jong Wook and Hallacy, Chris and Ramesh, Aditya and Goh, Gabriel and Agarwal, Sandhini and Sastry, Girish and Askell, Amanda and Mishkin, Pamela and Clark, Jack and others},
  booktitle={International conference on machine learning},
  pages={8748--8763},
  year={2021},
  organization={PmLR}
}

@inproceedings{li2022blip,
  title={BLIP: Bootstrapping Language-Image Pre-training for Unified Vision-Language Understanding and Generation},
  author={Li, Junnan and Li, Dongxu and Xiong, Caiming and Hoi, Steven},
  booktitle={Proceedings of the 39th International Conference on Machine Learning},
  pages={12888--12900},
  year={2022},
  publisher={PMLR}
}

@inproceedings{luddecke2022image,
  title={Image Segmentation Using Text and Image Prompts},
  author={L{\"u}ddecke, Timo and Ecker, Alexander},
  booktitle={Proceedings of the IEEE/CVF Conference on Computer Vision and Pattern Recognition},
  pages={7086--7096},
  year={2022}
}

@article{hendrycks2019benchmarking,
  title={Benchmarking neural network robustness to common corruptions and perturbations},
  author={Hendrycks, Dan and Dietterich, Thomas},
  journal={arXiv preprint arXiv:1903.12261},
  year={2019}
}

@inproceedings{shu2022tpt,
  title={Test-time prompt tuning for zero-shot generalization in vision-language models},
  author={Shu, Manli and Nie, Weili and Huang, De-An and Chen, Zhiding and Goldstein, Tom and Anandkumar, Anima and Xiao, Chaowei},
  booktitle={NeurIPS},
  year={2022}
}

@inproceedings{li2024tda,
  title={Test-time Domain Adaptation with Positive and Negative Caches},
  author={Li, Mingxuan and others},
  booktitle={CVPR},
  year={2024}
}

@inproceedings{sheng2025illusion,
  title={The Illusion of Progress? A Critical Look at Test-Time Adaptation for Vision-Language Models},
  author={Sheng, Lijun and Liang, Jian and He, Ran and Wang, Zilei and Tan, Tieniu},
  booktitle={Proc. NeurIPS},
  year={2025}
}

@article{wang2020tent,
  title={Tent: Fully test-time adaptation by entropy minimization},
  author={Wang, Dequan and Shelhamer, Evan and Liu, Shaoteng and Olshausen, Bruno and Darrell, Trevor},
  journal={arXiv preprint arXiv:2006.10726},
  year={2020}
}

@inproceedings{yu2025joint,
  title={Joint optimal transport and embedding for network alignment},
  author={Yu, Qi and Zeng, Zhichen and Yan, Yuchen and Ying, Lei and Srikant, R and Tong, Hanghang},
  booktitle={Proceedings of the ACM on Web Conference 2025},
  pages={2064--2075},
  year={2025}
}

@article{yu2025planetalign,
  title={PLANETALIGN: A Comprehensive Python Library for Benchmarking Network Alignment},
  author={Yu, Qi and Zeng, Zhichen and Yan, Yuchen and Liu, Zhining and Jing, Baoyu and Qiu, Ruizhong and Azad, Ariful and Tong, Hanghang},
  journal={arXiv preprint arXiv:2505.21366},
  year={2025}
}

@inproceedings{maharana2025batclip,
  title={Batclip: Bimodal online test-time adaptation for clip},
  author={Maharana, Sarthak and Zhang, Baoming and Karlinsky, Leonid and Feris, Rogerio and Guo, Yunhui},
  booktitle={Proceedings of the IEEE/CVF International Conference on Computer Vision},
  pages={1569--1579},
  year={2025}
}

@article{santambrogio2015optimal,
  title={Optimal transport for applied mathematicians},
  author={Santambrogio, Filippo},
  year={2015},
  publisher={Springer}
}

@book{peyre2019computational,
  title={Computational optimal transport: With applications to data science},
  author={Peyr{\'e}, Gabriel and Cuturi, Marco},
  year={2019},
  publisher={Now Foundations and Trends}
}

@article{nemirovski1999complexity,
  title={On complexity of matrix scaling},
  author={Nemirovski, Arkadi and Rothblum, Uriel},
  journal={Linear Algebra and its Applications},
  volume={302},
  pages={435--460},
  year={1999},
  publisher={Elsevier}
}

@article{krizhevsky2009learning,
  title={Learning multiple layers of features from tiny images},
  author={Krizhevsky, Alex and Hinton, Geoffrey and others},
  year={2009},
  publisher={Toronto, ON, Canada}
}

@inproceedings{deng2009imagenet,
  title={Imagenet: A large-scale hierarchical image database},
  author={Deng, Jia and Dong, Wei and Socher, Richard and Li, Li-Jia and Li, Kai and Fei-Fei, Li},
  booktitle={2009 IEEE conference on computer vision and pattern recognition},
  pages={248--255},
  year={2009},
  organization={Ieee}
}

@inproceedings{karmanov2024efficient,
  title={Efficient test-time adaptation of vision-language models},
  author={Karmanov, Adilbek and Guan, Dayan and Lu, Shijian and El Saddik, Abdulmotaleb and Xing, Eric},
  booktitle={Proceedings of the IEEE/CVF Conference on Computer Vision and Pattern Recognition},
  pages={14162--14171},
  year={2024}
}

@inproceedings{zhang2024dual,
  title={Dual memory networks: A versatile adaptation approach for vision-language models},
  author={Zhang, Yabin and Zhu, Wenjie and Tang, Hui and Ma, Zhiyuan and Zhou, Kaiyang and Zhang, Lei},
  booktitle={Proceedings of the IEEE/CVF conference on computer vision and pattern recognition},
  pages={28718--28728},
  year={2024}
}

@inproceedings{dobler2024lost,
  title={A lost opportunity for vision-language models: a comparative study of online test-time adaptation for vision-language models},
  author={D{\"o}bler, Mario and Marsden, Robert A and Raichle, Tobias and Yang, Bin},
  booktitle={European Conference on Computer Vision},
  pages={117--133},
  year={2024},
  organization={Springer}
}

@article{farina2024frustratingly,
  title={Frustratingly easy test-time adaptation of vision-language models},
  author={Farina, Matteo and Franchi, Gianni and Iacca, Giovanni and Mancini, Massimiliano and Ricci, Elisa},
  journal={Advances in Neural Information Processing Systems},
  volume={37},
  pages={129062--129093},
  year={2024}
}

@article{li2024efficient,
  title={Efficient and context-aware label propagation for zero-/few-shot training-free adaptation of vision-language model},
  author={Li, Yushu and Su, Yongyi and Goodge, Adam and Jia, Kui and Xu, Xun},
  journal={arXiv preprint arXiv:2412.18303},
  year={2024}
}

@inproceedings{yuan2023robust,
  title={Robust test-time adaptation in dynamic scenarios},
  author={Yuan, Longhui and Xie, Binhui and Li, Shuang},
  booktitle={Proceedings of the IEEE/CVF Conference on Computer Vision and Pattern Recognition},
  pages={15922--15932},
  year={2023}
}

@article{shu2022test,
  title={Test-time prompt tuning for zero-shot generalization in vision-language models},
  author={Shu, Manli and Nie, Weili and Huang, De-An and Yu, Zhiding and Goldstein, Tom and Anandkumar, Anima and Xiao, Chaowei},
  journal={Advances in Neural Information Processing Systems},
  volume={35},
  pages={14274--14289},
  year={2022}
}

@article{zhang2022memo,
  title={Memo: Test time robustness via adaptation and augmentation},
  author={Zhang, Marvin and Levine, Sergey and Finn, Chelsea},
  journal={Advances in neural information processing systems},
  volume={35},
  pages={38629--38642},
  year={2022}
}

@article{osowiechi2024watt,
  title={Watt: Weight average test time adaptation of clip},
  author={Osowiechi, David and Noori, Mehrdad and Hakim, Gustavo A and Yazdanpanah, Moslem and Bahri, Ali and Cheraghalikhani, Milad and Dastani, Sahar and Beizaee, Farzad and Ayed, Ismail B and Desrosiers, Christian},
  journal={Advances in neural information processing systems},
  volume={37},
  pages={48015--48044},
  year={2024}
}

@article{bao2025mint,
  title={Mint: A Simple Test-Time Adaptation of Vision-Language Models against Common Corruptions},
  author={Bao, Wenxuan and Deng, Ruxi and He, Jingrui},
  journal={arXiv preprint arXiv:2510.22127},
  year={2025}
}

@article{xiao2025dynaprompt,
  title={Dynaprompt: Dynamic test-time prompt tuning},
  author={Xiao, Zehao and Yan, Shilin and Hong, Jack and Cai, Jiayin and Jiang, Xiaolong and Hu, Yao and Shen, Jiayi and Wang, Qi and Snoek, Cees GM},
  journal={arXiv preprint arXiv:2501.16404},
  year={2025}
}

@inproceedings{chen2020graph,
  title={Graph optimal transport for cross-domain alignment},
  author={Chen, Liqun and Gan, Zhe and Cheng, Yu and Li, Linjie and Carin, Lawrence and Liu, Jingjing},
  booktitle={International Conference on Machine Learning},
  pages={1542--1553},
  year={2020},
  organization={PMLR}
}

@article{chen2022plot,
  title={Plot: Prompt learning with optimal transport for vision-language models},
  author={Chen, Guangyi and Yao, Weiran and Song, Xiangchen and Li, Xinyue and Rao, Yongming and Zhang, Kun},
  journal={arXiv preprint arXiv:2210.01253},
  year={2022}
}

@inproceedings{chen2026prompt,
  title={Prompt-ot: An optimal transport regularization paradigm for knowledge preservation in vision-language model adaptation},
  author={Chen, Xiwen and Zhu, Wenhui and Qiu, Peijie and Wang, Hao and Li, Huayu and Wu, Haiyu and Dong, Xuanzhao and Sotiras, Aristeidis and Wang, Yalin and Razi, Abolfazl},
  booktitle={Proceedings of the IEEE/CVF Winter Conference on Applications of Computer Vision},
  pages={667--676},
  year={2026}
}

@article{yi2024bridge,
  title={Bridge the modality and capability gaps in vision-language model selection},
  author={Yi, Chao and He, Yu-Hang and Zhan, De-Chuan and Ye, Han-Jia},
  journal={Advances in Neural Information Processing Systems},
  volume={37},
  pages={34429--34452},
  year={2024}
}

@inproceedings{shi2024ot,
  title={Ot-clip: Understanding and generalizing clip via optimal transport},
  author={Shi, Liangliang and Fan, Jack and Yan, Junchi},
  booktitle={Forty-first International Conference on Machine Learning},
  year={2024}
}

@inproceedings{zhang2025pre,
  title={Pre-training clip against data poisoning with optimal transport-based matching and alignment},
  author={Zhang, Tong and Gao, Kuofeng and Bai, Jiawang and Zhang, Leo Yu and Yin, Xin and Wang, Zonghui and Ji, Shouling and Chen, Wenzhi},
  booktitle={Proceedings of the 2025 Conference on Empirical Methods in Natural Language Processing},
  pages={9836--9849},
  year={2025}
}

@article{zhu2025enhancing,
  title={Enhancing CLIP Robustness via Cross-Modality Alignment},
  author={Zhu, Xingyu and Zhu, Beier and Wang, Shuo and Zhao, Kesen and Zhang, Hanwang},
  journal={arXiv preprint arXiv:2510.24038},
  year={2025}
}

@article{tanguy2025sliced,
  title={Sliced optimal transport plans},
  author={Tanguy, Eloi and Chapel, Laetitia and Delon, Julie},
  journal={arXiv preprint arXiv:2508.01243},
  year={2025}
}
\bibliographystyle{plainnat}


\appendix

\newpage
\section{Proof}\label{sec:proof}
\subsection{Proof of Lemma~\ref{lem:bridge}}\label{subsec:proof_bridge}

\begin{lemma*}
Given $\mathbf{C}_{i,j}=1-\mathbf{v}_i^\top\mathbf{t}_j$, the term $\textnormal{KL}(\mathbf{S}\|\tilde{\mathbf{P}}^\theta)$ in Eq.~\eqref{eq:loss-adapt} can be formulated as follows
\begin{equation*}
    \textnormal{KL}(\mathbf{S}\|\tilde{\mathbf{P}}^\theta)
    = \underbrace{\frac{1}{\tau}\left(\left<\mathbf{C}, \mathbf{S}\right>-\tau H(\mathbf{S})\right)}_{\text{entropic Wasserstein OT}}
    + \underbrace{1+\frac{1}{b}\sum_{i=1}^b \log \left[b\sum_{k=1}^c \exp\left(\frac{\mathbf{v}_i^\top\mathbf{t}_k - 1}{\tau}\right)\right]}_{\text{constant w.r.t. }\mathbf{S}}
\end{equation*}
\end{lemma*}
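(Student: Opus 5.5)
The identity follows from expanding the KL divergence and substituting the cost. Throughout, $\mathbf{S}$ is taken in the feasible set $\Pi(\boldsymbol{\mu}_i,\boldsymbol{\mu}_t)$ of Eq.~\eqref{eq:inf-ot}. This gives two facts used below: the row sums are $\sum_j\mathbf{S}_{i,j}=1/b$ for every $i$, and the total mass is $\sum_{i,j}\mathbf{S}_{i,j}=1$.

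\textbf{Step 1: split the KL divergence.} Write
\[
\mathrm{KL}(\mathbf{S}\|\tilde{\mathbf{P}}^\theta)=\sum_{i,j}\mathbf{S}_{i,j}\log\mathbf{S}_{i,j}-\sum_{i,j}\mathbf{S}_{i,j}\log\tilde{\mathbf{P}}^\theta_{i,j}.
\]
By the definition of $H$ in Eq.~\eqref{eq:ent-ot}, the first sum equals $-H(\mathbf{S})+\sum_{i,j}\mathbf{S}_{i,j}=-H(\mathbf{S})+1$. This is where the additive constant $1$ in Eq.~\eqref{eq:bridge} comes from.

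\textbf{Step 2: substitute the cost into the cross term.} From Eq.~\eqref{eq:loss-adapt},
\[
\log\tilde{\mathbf{P}}^\theta_{i,j}=\frac{\mathbf{v}_i^\top\mathbf{t}_j}{\tau}-\log\Big[b\sum_{k}\exp(\mathbf{v}_i^\top\mathbf{t}_k/\tau)\Big],
\]
and $\mathbf{v}_i^\top\mathbf{t}_j=1-\mathbf{C}_{i,j}$. The numerator part then gives
\[
-\sum_{i,j}\mathbf{S}_{i,j}\,\frac{\mathbf{v}_i^\top\mathbf{t}_j}{\tau}=\frac{1}{\tau}\langle\mathbf{C},\mathbf{S}\rangle-\frac{1}{\tau},
\]
using total mass $1$.

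\textbf{Step 3: recentre the log-partition term.} Factor $e^{1/\tau}$ out of the sum, so that
\[
\log\Big[b\sum_k\exp(\mathbf{v}_i^\top\mathbf{t}_k/\tau)\Big]=\frac{1}{\tau}+\log\Big[b\sum_k\exp\big((\mathbf{v}_i^\top\mathbf{t}_k-1)/\tau\big)\Big].
\]
This bracket depends only on $i$. Summing it against $\mathbf{S}_{i,j}$ over $j$ therefore just multiplies it by the row sum $1/b$. The $1/\tau$ piece contributes $b\cdot\frac1b\cdot\frac1\tau=\frac1\tau$, which cancels the $-\frac1\tau$ from Step 2. What remains is $\frac{1}{b}\sum_i\log[\cdots]$.

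\textbf{Step 4: collect terms.} Adding Steps 1--3 gives
\[
\frac{1}{\tau}\langle\mathbf{C},\mathbf{S}\rangle-H(\mathbf{S})+1+\frac1b\sum_i\log[\cdots],
\]
which is exactly Eq.~\eqref{eq:bridge}, since $\frac1\tau(\langle\mathbf{C},\mathbf{S}\rangle-\tau H(\mathbf{S}))=\frac1\tau\langle\mathbf{C},\mathbf{S}\rangle-H(\mathbf{S})$.

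The computation is routine. The main point needing care is that the identity, and in particular the claim that the last term is constant in $\mathbf{S}$, uses the marginal constraints: the row marginal $1/b$ and the unit total mass. These are what turn the $\mathbf{S}$-weighted log-partition sum into the fixed quantity $\frac1b\sum_i\log[\cdots]$ and make the $1/\tau$ and entropy constants come out as stated. I would therefore state explicitly that the lemma holds for $\mathbf{S}\in\Pi(\boldsymbol{\mu}_i,\boldsymbol{\mu}_t)$ with $\boldsymbol{\mu}_i$ uniform, which is exactly the set over which the subsequent minimization is taken.
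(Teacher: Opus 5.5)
Your proposal is correct and takes essentially the same route as the paper's proof. Both expand the KL, substitute $\mathbf{v}_i^\top\mathbf{t}_j = 1-\mathbf{C}_{i,j}$, and use the unit total mass and the uniform row marginals $1/b$ to turn the $\sum_{i,j}\mathbf{S}_{i,j}$-weighted terms into the stated constants. The only cosmetic difference is bookkeeping: you factor $e^{1/\tau}$ out of the log-partition term up front, whereas the paper carries $\log b - 1/\tau$ separately and merges it into the bracket in the final line.
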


\begin{proof}
    Given that the transport plan $\mathbf{S}$ is a probability matrix with a uniform marginal distribution at the image side, i.e., $\sum_{i,j}\mathbf{S}_{i,j}=1,\sum_{j=1}^{c}\mathbf{S}_{i,j}=\frac{1}{b}$, and $H(\mathbf{S}):=-\sum_{i,j}\mathbf{S}_{i,j}\left(\log\mathbf{S}_{i,j}-1\right)$, the term $\textnormal{KL}(\mathbf{S}\|\tilde{\mathbf{P}}^\theta)$ in Eq.~\eqref{eq:loss-adapt} can be expanded as follows
    \begin{equation*}
    \begin{aligned}
    \textnormal{KL}(\mathbf{S}\|\tilde{\mathbf{P}}^\theta)
    &=\sum_{i=1}^{b}\sum_{j=1}^c\mathbf{S}_{i,j}\log\frac{\mathbf{S}_{i,j}}{\tilde{\mathbf{P}}^\theta_{i,j}}\\
    &=\sum_{i=1}^{b}\sum_{j=1}^c\mathbf{S}_{i,j}\log\mathbf{S}_{i,j}-\sum_{i=1}^{b}\sum_{j=1}^c\mathbf{S}_{ij}\left[\frac{\mathbf{v}_i^\top\mathbf{t}_j}{\tau}-\log b-\log \sum_{k=1}^c\exp\left(\frac{\mathbf{v}_i^\top\mathbf{t}_k}{\tau}\right)\right]\\
    &=\sum_{i=1}^{b}\sum_{j=1}^c\mathbf{S}_{i,j}\log\mathbf{S}_{i,j}-\frac{1}{\tau}\sum_{i=1}^{b}\sum_{j=1}^c\mathbf{S}_{i,j}\left(1-\mathbf{C}_{i,j}\right)+\log b+\sum_{i=1}^{b}\sum_{j=1}^c\mathbf{S}_{i,j}\log\sum_{k=1}^c\exp\left(\frac{\mathbf{v}_i^\top\mathbf{t}_k}{\tau}\right)\\
    &=\frac{1}{\tau}\left<\mathbf{C},\mathbf{S}\right>-H(\mathbf{S})+1-\frac{1}{\tau}+\log b+\sum_{i=1}^b\frac{1}{b}\log\sum_{k=1}^c\exp\left(\frac{\mathbf{v}_i^\top\mathbf{t}_k}{\tau}\right)\\
    &=\frac{1}{\tau}(\left<\mathbf{C}, \mathbf{S}\right>-\tau H(\mathbf{S}))+1+\frac{1}{b}\sum_{i=1}^b \log \left[b\sum_{k=1}^c \exp\left(\frac{\mathbf{v}_i^\top\mathbf{t}_k - 1}{\tau}\right)\right]
    \end{aligned}
    \end{equation*}
\end{proof}

In this way, we have proven Lemma~\ref{lem:bridge}.

\section{Detailed Experimental Setup}\label{sec:exp_setup}

\paragraph{Datasets \& Metrics.} We adopt three standard benchmarks for corrupted image classification: CIFAR10-C~\citep{krizhevsky2009learning}, CIFAR100-C~\citep{hendrycks2019benchmarking}, and ImageNet-C~\citep{deng2009imagenet}, which share the same 15 different types of corruptions. Following the standard TTA protocol~\citep{wang2020tent}, all results reported are based on the highest severity level (Level 5). We adopt the classification accuracy as the evaluation metric~\citep{maharana2025batclip}.

\paragraph{VLM Backbones.} We adopt the CLIP~\citep{radford2021learning} model with different visual encoders, including ViT-B-16 and ViT-B-32. ViT-B-16 is adopted as default unless noted otherwise.

\begin{wraptable}{r}{0.3\linewidth}
    \vspace{-10pt}
    \centering
    \setlength\tabcolsep{5pt}
    \caption{Detailed hyperparameter settings of \algname.}
    \vspace{-5pt}
    \begin{tabular}{lcc}
        \toprule

        Hyperparameters & $\alpha$ & $\tau$/$\epsilon$ \\
        \midrule
        CIFAR10-C & 0.5 & 0.01 \\
        CIFAR100-C & 0.5 & 0.01 \\
        ImageNet-C & 0.3 & 0.01 \\
        
        \bottomrule
    \end{tabular}
    \label{tab:param}
    \vspace{-5pt}
\end{wraptable}


\paragraph{VLM TTA Baselines.} We compare \algname\ against state-of-the-art VLM TTA approaches. For training-free methods which relies on memory management or data augmentation, we include TDA~\citep{karmanov2024efficient}, DMN~\citep{zhang2024dual}, VTE~\citep{dobler2024lost}, ZERO~\citep{farina2024frustratingly}, ECALP~\citep{li2024efficient}, and TENT~\citep{wang2020tent}. For training-based methods that optimizes prompts or tunable parameters inside VLMs during adaptation, we include RoTTA~\citep{yuan2023robust}, TPT~\citep{shu2022test}, MEMO~\citep{zhang2022memo}, WATT~\citep{osowiechi2024watt}, MINT~\citep{bao2025mint}, and BATCLIP~\citep{maharana2025batclip}.

\paragraph{Reproducibility.} For all experiments, the reported results are averaged against 5 different runs with randomized data loader. To ensure a fair comparison, all baselines are evaluated on a dataset under the same batch size $b$, which is 200 for CIFAR10-C and CIFAR100-C, and 64 for ImageNet-C. Hyperparameters of all baselines VLM TTA methods are set as default in their official implementations. Detailed hyperparameter values of \algname\ are listed in the following Table~\ref{tab:param}. For all training-based TTA methods, the AdamW optimizer is used with a learning rate of 5e-3 to adapt the VLMs.

\paragraph{Machine.} All experiments are conducted on a server with dual Intel® Xeon® Gold 6240R CPUs and 4 NVIDIA Tesla V100-SXM2 32GB GPUs.

\section{Additional Experimental Results}


\begin{table*}[h]
\centering
\setlength\tabcolsep{2pt}
\caption{Benchmarking results with ViT-B-32. The \textbf{1st}/\underline{2nd} best results are highlighted in \textbf{bold} and \underline{underline}, respectively. The proposed \algname\ consistently achieves the best performance across all benchmarks, with up to 10\% improvement in mean accuracy.}
\label{tab:exp_bench_vit_b32}
\resizebox{\textwidth}{!}{
\begin{tabular}{clccccccccccccccccc}
\toprule
\multirow{2}{*}{Dataset} & \multirow{2}{*}{Method}
& \multicolumn{3}{c}{Noise}
& \multicolumn{4}{c}{Blur}
& \multicolumn{4}{c}{Weather}
& \multicolumn{4}{c}{Digital}
& \multirow{2}{*}{Mean} \\
\cmidrule(lr){3-5}
\cmidrule(lr){6-9}
\cmidrule(lr){10-13}
\cmidrule(lr){14-17}
& & Gauss. & Shot & Impul.
& Defoc. & Glass & Motion & Zoom
& Snow & Frost & Fog & Brit.
& Contr. & Elast. & Pixel. & JPEG
& \\
\midrule

\multirow{13}{*}{\rotatebox{90}{CIFAR10-C}} 
& Source	& 36.0 & 40.0 & 43.5 & 70.1 & 41.4 & 64.3 & 69.8 & 70.8 & 72.6 & 66.6 & 81.3 & 64.6 & 59.3 & 48.6 & 56.9 & 59.3 \\
& TDA	& 41.8 & 43.2 & 41.7 & 71.3 & 44.7 & 66.9 & 72.4 & 72.6 & 74.7 & 68.4 & 83.5 & 66.1 & 62.9 & 50.9 & 55.8 & 61.1 \\
& DMN	& 38.9 & 39.3 & 42.5 & 70.6 & 39.2 & 65.1 & 72.7 & 72.3 & 73.4 & 66.4 & 82.8 & 58.6 & 60.7 & 49.1 & 56.3 & 59.1 \\
& VTE	& 47.7 & 49.8 & 53.6 & 71.6 & 54.2 & 67.9 & 72.9 & \underline{76.9} & 76.5 & 70.9 & 83.7 & 60.9 & \underline{69.2} & 58.7 & 61.2 & 65.3 \\
& ZERO	& 36.9 & 41.6 & 46.8 & 70.8 & 42.1 & 65.3 & 71.7 & 71.9 & 73.4 & 68.8 & 83.0 & 65.3 & 62.4 & 47.7 & 59.0 & 60.4 \\
& ECALP	& 44.4 & 47.3 & 44.2 & 70.8 & 43.2 & 67.9 & 74.2 & 72.7 & 74.7 & 67.7 & 82.2 & 62.9 & 61.5 & 47.7 & 55.8 & 60.9 \\
& RoTTA	& 36.7 & 41.1 & 43.7 & 70.2 & 42.8 & 64.7 & 70.0 & 71.2 & 72.6 & 66.9 & 81.8 & 64.7 & 60.3 & 49.1 & 57.3 & 59.1 \\
& TPT	& 43.6 & 46.8 & 48.0 & 71.1 & 47.8 & 66.5 & 72.0 & 73.5 & 76.4 & 68.6 & 84.0 & 66.4 & 64.1 & 51.4 & 58.1 & 62.4 \\
& MEMO	& 36.7 & 40.6 & 44.1 & 71.2 & 42.3 & 64.7 & 70.2 & 72.1 & 73.4 & 68.1 & 82.2 & 64.4 & 61.0 & 47.5 & 57.4 & 59.6 \\
& WATT	& 43.6 & 49.1 & 49.4 & 72.1 & 46.4 & 67.0 & 71.4 & 73.9 & 74.1 & 71.2 & 83.9 & 72.8 & 61.6 & 58.8 & \underline{63.2} & 63.6 \\
& MINT	& \underline{54.4} & \underline{57.8} & 47.5 & 73.3 & \underline{55.9} & 74.0 & \underline{76.1} & 74.5 & 74.6 & 70.5 & 85.6 & 70.9 & 64.5 & \underline{59.2} & 59.4 & 66.4 \\
& BATCLIP	& 51.8 & 56.0 & \underline{54.2} & \underline{76.0} & 55.2 & \underline{74.6} & 75.5 & 76.8 & \underline{78.5} & \underline{75.2} & \underline{86.3} & \underline{77.0} & 67.1 & 57.8 & 61.6 & \underline{68.2} \\
& Ours	& \textbf{68.0} & \textbf{72.1} & \textbf{65.4} & \textbf{81.3} & \textbf{70.2} & \textbf{81.1} & \textbf{83.4} & \textbf{83.1} & \textbf{83.5} & \textbf{84.2} & \textbf{88.4} & \textbf{86.3} & \textbf{74.5} & \textbf{80.3} & \textbf{73.3} & \textbf{78.2} \\

\midrule

\multirow{13}{*}{\rotatebox{90}{CIFAR100-C}} 
& Source	& 15.8 & 18.1 & 17.3 & 39.3 & 17.4 & 38.2 & 43.5 & 42.2 & 43.3 & 39.6 & 50.8 & 29.5 & 29.2 & 23.3 & 29.6 & 31.5 \\
& TDA	& 18.7 & 20.9 & 19.8 & 40.2 & 17.9 & 39.7 & 45.1 & 44.3 & 45.2 & 40.3 & 52.2 & 30.9 & 29.6 & 23.0 & 31.0 & 33.7 \\
& DMN	& 17.3 & 20.0 & 13.7 & 36.9 & 15.6 & 40.6 & 45.6 & 43.2 & 43.9 & 39.5 & 52.1 & 27.2 & 27.5 & 19.5 & 29.2 & 31.7 \\
& VTE	& 16.7 & 18.1 & 19.1 & 40.0 & 22.9 & 38.7 & 43.7 & 45.0 & 45.2 & 39.0 & 49.8 & 28.6 & 34.4 & 27.1 & 30.2 & 33.4 \\
& ZERO	& 15.5 & 18.2 & 19.7 & 41.3 & 17.2 & 40.0 & 44.3 & 43.4 & 43.4 & 40.8 & 51.4 & 29.3 & 30.4 & 21.5 & 31.5 & 32.2 \\
& ECALP	& 18.7 & 19.2 & 17.4 & 38.3 & 17.8 & 39.9 & 44.3 & 43.7 & 44.9 & 41.0 & 52.7 & 31.2 & 30.7 & 23.1 & 30.1 & 33.2 \\
& RoTTA	& 16.4 & 18.8 & 18.2 & 38.7 & 17.6 & 39.0 & 43.3 & 42.8 & 43.8 & 39.7 & 51.0 & 29.0 & 29.1 & 23.7 & 30.2 & 32.2 \\
& TPT	& 16.5 & 17.5 & 17.0 & 39.1 & 19.3 & 38.8 & 43.9 & 43.4 & 44.8 & 40.3 & 50.7 & 27.6 & 30.5 & 23.8 & 29.3 & 32.4 \\
& MEMO	& 16.9 & 18.1 & 18.1 & 41.3 & 17.2 & 40.6 & 45.5 & 44.3 & 44.4 & 40.5 & 52.6 & 29.7 & 30.9 & 22.6 & 31.3 & 33.1 \\
& WATT	& 21.7 & 22.0 & \underline{23.7} & \underline{47.8} & 18.3 & 44.3 & \underline{49.9} & \underline{47.7} & \underline{47.5} & \underline{45.2} & 57.1 & \underline{44.1} & 32.8 & \underline{29.9} & \underline{35.3} & \underline{38.2} \\
& MINT	& \underline{23.7} & \underline{26.3} & 21.5 & 46.0 & \underline{23.5} & 42.3 & 48.5 & 46.5 & 45.2 & 43.3 & 55.7 & 38.6 & 33.8 & 28.2 & 32.1 & 37.2 \\
& BATCLIP	& 21.5 & 25.0 & 22.6 & 46.6 & 22.9 & \underline{44.4} & 49.5 & 47.0 & 46.5 & 44.7 & \underline{58.1} & 38.6 & \underline{34.6} & 28.6 & 33.4 & 37.3 \\
& Ours	& \textbf{32.7} & \textbf{35.4} & \textbf{30.7} & \textbf{49.8} & \textbf{34.9} & \textbf{48.6} & \textbf{53.2} & \textbf{50.2} & \textbf{50.9} & \textbf{50.0} & \textbf{58.8} & \textbf{49.6} & \textbf{41.2} & \textbf{41.5} & \textbf{39.8} & \textbf{44.2} \\

\midrule

\multirow{13}{*}{\rotatebox{90}{ImageNet-C}} 
& Source	& 13.2 & 13.1 & 12.4 & 24.5 & 11.4 & 23.1 & 20.6 & 25.3 & 26.3 & 30.5 & 50.9 & 17.0 & 19.1 & 32.1 & 29.5 & 23.2 \\
& TDA	& 12.4 & 15.0 & 14.9 & 24.0 & 13.3 & 23.4 & 21.0 & 26.9 & \underline{27.8} & 32.7 & \textbf{51.8} & 18.5 & 20.3 & 33.4 & 30.6 & 24.6 \\
& DMN	& 13.5 & 14.1 & 14.7 & 23.3 & 12.0 & 21.8 & 19.4 & 23.4 & 26.1 & 29.3 & 50.7 & 14.8 & 20.3 & 32.1 & 29.4 & 23.3 \\
& VTE	& 11.9 & 12.0 & 13.6 & 24.6 & 11.5 & 22.9 & 22.4 & 27.0 & 26.7 & 32.1 & \underline{51.3} & 17.2 & 19.9 & 35.3 & 33.0 & 24.5 \\
& ZERO	& 11.9 & 13.0 & 12.7 & 24.9 & 11.8 & 22.5 & 21.8 & 26.5 & 26.9 & 31.0 & 51.1 & 16.7 & 19.5 & 34.1 & 30.2 & 23.3 \\
& ECALP	& 14.2 & 15.5 & 15.0 & 23.1 & 12.6 & 23.3 & 21.0 & 24.1 & 25.4 & 29.7 & 48.4 & 18.0 & 19.4 & 31.3 & 28.9 & 23.1 \\
& RoTTA	& 13.5 & 13.7 & 13.5 & 24.2 & 12.5 & 23.0 & 20.0 & 25.5 & 26.4 & 30.2 & 50.0 & 17.6 & 19.4 & 32.1 & 29.5 & 23.5 \\
& TPT	& 12.5 & 12.3 & 12.6 & 25.0 & 12.1 & 22.9 & 20.7 & 26.8 & 26.6 & 30.1 & 50.9 & 16.9 & 19.9 & 33.1 & 30.1 & 24.0 \\
& MEMO	& 12.4 & 13.4 & 13.1 & 24.9 & 11.5 & 22.7 & 20.1 & 25.4 & 25.8 & 30.3 & 50.1 & 17.5 & 19.0 & 32.1 & 29.0 & 23.5 \\
& WATT	& 14.5 & 14.2 & 14.2 & \underline{26.1} & 15.5 & 25.3 & 22.3 & 26.4 & 25.4 & 31.2 & 50.3 & \underline{22.6} & 19.4 & 33.1 & 30.3 & 24.7 \\
& MINT	& \underline{18.7} & \underline{19.5} & \underline{19.8} & 24.7 & 18.8 & 27.5 & 22.8 & 27.6 & 27.6 & 32.4 & 50.9 & 20.6 & 23.6 & 34.7 & 33.4 & 26.8 \\
& BATCLIP	& 17.6 & 18.8 & 16.8 & 24.9 & \underline{20.5} & \underline{28.0} & \underline{24.3} & \underline{29.1} & 27.4 & \underline{35.7} & 49.4 & 20.2 & \underline{28.0} & \underline{35.7} & \textbf{34.5} & \underline{27.5} \\
& Ours	& \textbf{23.5} & \textbf{24.5} & \textbf{24.1} & \textbf{27.5} & \textbf{25.4} & \textbf{30.1} & \textbf{27.9} & \textbf{31.3} & \textbf{30.3} & \textbf{37.2} & 45.7 & \textbf{27.8} & \textbf{31.9} & \textbf{36.0} & \underline{34.0} & \textbf{30.6} \\

\bottomrule
\end{tabular}
}
\end{table*}

\section{Limitations \& Future Works}\label{sec:limit}
While \algname\ demonstrates strong performance, we acknowledge a few limitations. Firstly, \algname\ is an on line VLM TTA method which is not directly applicable to strictly episodic TTA settings without maintaining an explicit memory for historical samples. Secondly, like most training-based VLM TTA methods, \algname\ requires access to model gradients for adapting VLMs at test time, which may limits its applicability to closed-source proprietary models. For future works, it would be interesting to discover episodic extension of \algname\ by leveraging efficient OT variants, e.g., sliced OT~\citep{tanguy2025sliced}, for incremental updates of the transport plan, or extending the OT framework to black-box adaption settings.

\section{Boarder Impact}\label{sec:impact}
This paper aim to advance the field of cross-modal learning, vision-language model, and optimal transport. There are many potential societal consequences of our work, none of which we feel must be specifically highlighted here.


\newpage
\section*{NeurIPS Paper Checklist}

The checklist is designed to encourage best practices for responsible machine learning research, addressing issues of reproducibility, transparency, research ethics, and societal impact. Do not remove the checklist: {\bf The papers not including the checklist will be desk rejected.} The checklist should follow the references and follow the (optional) supplemental material.  The checklist does NOT count towards the page
limit. 

Please read the checklist guidelines carefully for information on how to answer these questions. For each question in the checklist:
\begin{itemize}
    \item You should answer \answerYes{}, \answerNo{}, or \answerNA{}.
    \item \answerNA{} means either that the question is Not Applicable for that particular paper or the relevant information is Not Available.
    \item Please provide a short (1--2 sentence) justification right after your answer (even for \answerNA). 
\end{itemize}

{\bf The checklist answers are an integral part of your paper submission.} They are visible to the reviewers, area chairs, senior area chairs, and ethics reviewers. You will also be asked to include it (after eventual revisions) with the final version of your paper, and its final version will be published with the paper.

The reviewers of your paper will be asked to use the checklist as one of the factors in their evaluation. While \answerYes{} is generally preferable to \answerNo{}, it is perfectly acceptable to answer \answerNo{} provided a proper justification is given (e.g., error bars are not reported because it would be too computationally expensive'' or ``we were unable to find the license for the dataset we used''). In general, answering \answerNo{} or \answerNA{} is not grounds for rejection. While the questions are phrased in a binary way, we acknowledge that the true answer is often more nuanced, so please just use your best judgment and write a justification to elaborate. All supporting evidence can appear either in the main paper or the supplemental material, provided in appendix. If you answer \answerYes{} to a question, in the justification please point to the section(s) where related material for the question can be found.

IMPORTANT, please:
\begin{itemize}
    \item {\bf Delete this instruction block, but keep the section heading ``NeurIPS Paper Checklist"},
    \item  {\bf Keep the checklist subsection headings, questions/answers and guidelines below.}
    \item {\bf Do not modify the questions and only use the provided macros for your answers}.
\end{itemize}


\begin{enumerate}

\item {\bf Claims}
    \item[] Question: Do the main claims made in the abstract and introduction accurately reflect the paper's contributions and scope?
    \item[] Answer: \answerYes{}
    \item[] Justification: The abstract and introduction clearly state the core contributions of our paper, which are fully supported by theoretical analysis and extensive experiments.
    \item[] Guidelines:
    \begin{itemize}
        \item The answer \answerNA{} means that the abstract and introduction do not include the claims made in the paper.
        \item The abstract and/or introduction should clearly state the claims made, including the contributions made in the paper and important assumptions and limitations. A \answerNo{} or \answerNA{} answer to this question will not be perceived well by the reviewers. 
        \item The claims made should match theoretical and experimental results, and reflect how much the results can be expected to generalize to other settings. 
        \item It is fine to include aspirational goals as motivation as long as it is clear that these goals are not attained by the paper. 
    \end{itemize}

\item {\bf Limitations}
    \item[] Question: Does the paper discuss the limitations of the work performed by the authors?
    \item[] Answer: \answerYes{}
    \item[] Justification: Limitation and future works are included in Appendix~\ref{sec:limit}
    \item[] Guidelines:
    \begin{itemize}
        \item The answer \answerNA{} means that the paper has no limitation while the answer \answerNo{} means that the paper has limitations, but those are not discussed in the paper. 
        \item The authors are encouraged to create a separate ``Limitations'' section in their paper.
        \item The paper should point out any strong assumptions and how robust the results are to violations of these assumptions (e.g., independence assumptions, noiseless settings, model well-specification, asymptotic approximations only holding locally). The authors should reflect on how these assumptions might be violated in practice and what the implications would be.
        \item The authors should reflect on the scope of the claims made, e.g., if the approach was only tested on a few datasets or with a few runs. In general, empirical results often depend on implicit assumptions, which should be articulated.
        \item The authors should reflect on the factors that influence the performance of the approach. For example, a facial recognition algorithm may perform poorly when image resolution is low or images are taken in low lighting. Or a speech-to-text system might not be used reliably to provide closed captions for online lectures because it fails to handle technical jargon.
        \item The authors should discuss the computational efficiency of the proposed algorithms and how they scale with dataset size.
        \item If applicable, the authors should discuss possible limitations of their approach to address problems of privacy and fairness.
        \item While the authors might fear that complete honesty about limitations might be used by reviewers as grounds for rejection, a worse outcome might be that reviewers discover limitations that aren't acknowledged in the paper. The authors should use their best judgment and recognize that individual actions in favor of transparency play an important role in developing norms that preserve the integrity of the community. Reviewers will be specifically instructed to not penalize honesty concerning limitations.
    \end{itemize}

\item {\bf Theory assumptions and proofs}
    \item[] Question: For each theoretical result, does the paper provide the full set of assumptions and a complete (and correct) proof?
    \item[] Answer: \answerYes{}.
    \item[] Justification: All theoretical results are included in Section~\ref{subsec:me-bridge} and proved in Appendix~\ref{sec:proof}, providing full set of assumptions and complete (and correct) proofs.
    \item[] Guidelines:
    \begin{itemize}
        \item The answer \answerNA{} means that the paper does not include theoretical results. 
        \item All the theorems, formulas, and proofs in the paper should be numbered and cross-referenced.
        \item All assumptions should be clearly stated or referenced in the statement of any theorems.
        \item The proofs can either appear in the main paper or the supplemental material, but if they appear in the supplemental material, the authors are encouraged to provide a short proof sketch to provide intuition. 
        \item Inversely, any informal proof provided in the core of the paper should be complemented by formal proofs provided in appendix or supplemental material.
        \item Theorems and Lemmas that the proof relies upon should be properly referenced. 
    \end{itemize}

    \item {\bf Experimental result reproducibility}
    \item[] Question: Does the paper fully disclose all the information needed to reproduce the main experimental results of the paper to the extent that it affects the main claims and/or conclusions of the paper (regardless of whether the code and data are provided or not)?
    \item[] Answer: \answerYes{} 
    \item[] Justification: Information needed to reproduce the experimental results are detailed in Appendix~\ref{sec:exp_setup}
    \item[] Guidelines:
    \begin{itemize}
        \item The answer \answerNA{} means that the paper does not include experiments.
        \item If the paper includes experiments, a \answerNo{} answer to this question will not be perceived well by the reviewers: Making the paper reproducible is important, regardless of whether the code and data are provided or not.
        \item If the contribution is a dataset and\slash or model, the authors should describe the steps taken to make their results reproducible or verifiable. 
        \item Depending on the contribution, reproducibility can be accomplished in various ways. For example, if the contribution is a novel architecture, describing the architecture fully might suffice, or if the contribution is a specific model and empirical evaluation, it may be necessary to either make it possible for others to replicate the model with the same dataset, or provide access to the model. In general. releasing code and data is often one good way to accomplish this, but reproducibility can also be provided via detailed instructions for how to replicate the results, access to a hosted model (e.g., in the case of a large language model), releasing of a model checkpoint, or other means that are appropriate to the research performed.
        \item While NeurIPS does not require releasing code, the conference does require all submissions to provide some reasonable avenue for reproducibility, which may depend on the nature of the contribution. For example
        \begin{enumerate}
            \item If the contribution is primarily a new algorithm, the paper should make it clear how to reproduce that algorithm.
            \item If the contribution is primarily a new model architecture, the paper should describe the architecture clearly and fully.
            \item If the contribution is a new model (e.g., a large language model), then there should either be a way to access this model for reproducing the results or a way to reproduce the model (e.g., with an open-source dataset or instructions for how to construct the dataset).
            \item We recognize that reproducibility may be tricky in some cases, in which case authors are welcome to describe the particular way they provide for reproducibility. In the case of closed-source models, it may be that access to the model is limited in some way (e.g., to registered users), but it should be possible for other researchers to have some path to reproducing or verifying the results.
        \end{enumerate}
    \end{itemize}

\item {\bf Open access to data and code}
    \item[] Question: Does the paper provide open access to the data and code, with sufficient instructions to faithfully reproduce the main experimental results, as described in supplemental material?
    \item[] Answer: \answerNo{} 
    \item[] Justification: Code and data will be released upon publication.
    \item[] Guidelines:
    \begin{itemize}
        \item The answer \answerNA{} means that paper does not include experiments requiring code.
        \item Please see the NeurIPS code and data submission guidelines (\url{https://neurips.cc/public/guides/CodeSubmissionPolicy}) for more details.
        \item While we encourage the release of code and data, we understand that this might not be possible, so \answerNo{} is an acceptable answer. Papers cannot be rejected simply for not including code, unless this is central to the contribution (e.g., for a new open-source benchmark).
        \item The instructions should contain the exact command and environment needed to run to reproduce the results. See the NeurIPS code and data submission guidelines (\url{https://neurips.cc/public/guides/CodeSubmissionPolicy}) for more details.
        \item The authors should provide instructions on data access and preparation, including how to access the raw data, preprocessed data, intermediate data, and generated data, etc.
        \item The authors should provide scripts to reproduce all experimental results for the new proposed method and baselines. If only a subset of experiments are reproducible, they should state which ones are omitted from the script and why.
        \item At submission time, to preserve anonymity, the authors should release anonymized versions (if applicable).
        \item Providing as much information as possible in supplemental material (appended to the paper) is recommended, but including URLs to data and code is permitted.
    \end{itemize}

\item {\bf Experimental setting/details}
    \item[] Question: Does the paper specify all the training and test details (e.g., data splits, hyperparameters, how they were chosen, type of optimizer) necessary to understand the results?
    \item[] Answer: \answerYes{} 
    \item[] Justification: Detailed training and test details are describe in Appendix~\ref{sec:exp_setup}.
    \item[] Guidelines:
    \begin{itemize}
        \item The answer \answerNA{} means that the paper does not include experiments.
        \item The experimental setting should be presented in the core of the paper to a level of detail that is necessary to appreciate the results and make sense of them.
        \item The full details can be provided either with the code, in appendix, or as supplemental material.
    \end{itemize}

\item {\bf Experiment statistical significance}
    \item[] Question: Does the paper report error bars suitably and correctly defined or other appropriate information about the statistical significance of the experiments?
    \item[] Answer: \answerYes{} 
    \item[] Justification: the reported results are averaged against 5 different runs with randomized data loader.
    \begin{itemize}
        \item The answer \answerNA{} means that the paper does not include experiments.
        \item The authors should answer \answerYes{} if the results are accompanied by error bars, confidence intervals, or statistical significance tests, at least for the experiments that support the main claims of the paper.
        \item The factors of variability that the error bars are capturing should be clearly stated (for example, train/test split, initialization, random drawing of some parameter, or overall run with given experimental conditions).
        \item The method for calculating the error bars should be explained (closed form formula, call to a library function, bootstrap, etc.)
        \item The assumptions made should be given (e.g., Normally distributed errors).
        \item It should be clear whether the error bar is the standard deviation or the standard error of the mean.
        \item It is OK to report 1-sigma error bars, but one should state it. The authors should preferably report a 2-sigma error bar than state that they have a 96\% CI, if the hypothesis of Normality of errors is not verified.
        \item For asymmetric distributions, the authors should be careful not to show in tables or figures symmetric error bars that would yield results that are out of range (e.g., negative error rates).
        \item If error bars are reported in tables or plots, the authors should explain in the text how they were calculated and reference the corresponding figures or tables in the text.
    \end{itemize}

\item {\bf Experiments compute resources}
    \item[] Question: For each experiment, does the paper provide sufficient information on the computer resources (type of compute workers, memory, time of execution) needed to reproduce the experiments?
    \item[] Answer: \answerYes{} 
    \item[] Justification: Computer resources are described in Appendix~\ref{sec:exp_setup}
    \item[] Guidelines:
    \begin{itemize}
        \item The answer \answerNA{} means that the paper does not include experiments.
        \item The paper should indicate the type of compute workers CPU or GPU, internal cluster, or cloud provider, including relevant memory and storage.
        \item The paper should provide the amount of compute required for each of the individual experimental runs as well as estimate the total compute. 
        \item The paper should disclose whether the full research project required more compute than the experiments reported in the paper (e.g., preliminary or failed experiments that didn't make it into the paper). 
    \end{itemize}
    
\item {\bf Code of ethics}
    \item[] Question: Does the research conducted in the paper conform, in every respect, with the NeurIPS Code of Ethics \url{https://neurips.cc/public/EthicsGuidelines}?
    \item[] Answer: \answerYes{} 
    \item[] Justification: The research conducted in the paper conform with the NeurIPS Code of Ethics in every respect.
    \item[] Guidelines:
    \begin{itemize}
        \item The answer \answerNA{} means that the authors have not reviewed the NeurIPS Code of Ethics.
        \item If the authors answer \answerNo, they should explain the special circumstances that require a deviation from the Code of Ethics.
        \item The authors should make sure to preserve anonymity (e.g., if there is a special consideration due to laws or regulations in their jurisdiction).
    \end{itemize}

\item {\bf Broader impacts}
    \item[] Question: Does the paper discuss both potential positive societal impacts and negative societal impacts of the work performed?
    \item[] Answer: \answerYes{} 
    \item[] Justification: Boarder impacts of our work are discussed in Appendix~\ref{sec:impact}
    \item[] Guidelines:
    \begin{itemize}
        \item The answer \answerNA{} means that there is no societal impact of the work performed.
        \item If the authors answer \answerNA{} or \answerNo, they should explain why their work has no societal impact or why the paper does not address societal impact.
        \item Examples of negative societal impacts include potential malicious or unintended uses (e.g., disinformation, generating fake profiles, surveillance), fairness considerations (e.g., deployment of technologies that could make decisions that unfairly impact specific groups), privacy considerations, and security considerations.
        \item The conference expects that many papers will be foundational research and not tied to particular applications, let alone deployments. However, if there is a direct path to any negative applications, the authors should point it out. For example, it is legitimate to point out that an improvement in the quality of generative models could be used to generate Deepfakes for disinformation. On the other hand, it is not needed to point out that a generic algorithm for optimizing neural networks could enable people to train models that generate Deepfakes faster.
        \item The authors should consider possible harms that could arise when the technology is being used as intended and functioning correctly, harms that could arise when the technology is being used as intended but gives incorrect results, and harms following from (intentional or unintentional) misuse of the technology.
        \item If there are negative societal impacts, the authors could also discuss possible mitigation strategies (e.g., gated release of models, providing defenses in addition to attacks, mechanisms for monitoring misuse, mechanisms to monitor how a system learns from feedback over time, improving the efficiency and accessibility of ML).
    \end{itemize}
    
\item {\bf Safeguards}
    \item[] Question: Does the paper describe safeguards that have been put in place for responsible release of data or models that have a high risk for misuse (e.g., pre-trained language models, image generators, or scraped datasets)?
    \item[] Answer: \answerNA{} 
    \item[] Justification: Our work focuses on VLM TTA and does not release new data or models.
    \item[] Guidelines:
    \begin{itemize}
        \item The answer \answerNA{} means that the paper poses no such risks.
        \item Released models that have a high risk for misuse or dual-use should be released with necessary safeguards to allow for controlled use of the model, for example by requiring that users adhere to usage guidelines or restrictions to access the model or implementing safety filters. 
        \item Datasets that have been scraped from the Internet could pose safety risks. The authors should describe how they avoided releasing unsafe images.
        \item We recognize that providing effective safeguards is challenging, and many papers do not require this, but we encourage authors to take this into account and make a best faith effort.
    \end{itemize}

\item {\bf Licenses for existing assets}
    \item[] Question: Are the creators or original owners of assets (e.g., code, data, models), used in the paper, properly credited and are the license and terms of use explicitly mentioned and properly respected?
    \item[] Answer: \answerYes{} 
    \item[] Justification: All creators and original owners of assets are properly credited.
    \item[] Guidelines:
    \begin{itemize}
        \item The answer \answerNA{} means that the paper does not use existing assets.
        \item The authors should cite the original paper that produced the code package or dataset.
        \item The authors should state which version of the asset is used and, if possible, include a URL.
        \item The name of the license (e.g., CC-BY 4.0) should be included for each asset.
        \item For scraped data from a particular source (e.g., website), the copyright and terms of service of that source should be provided.
        \item If assets are released, the license, copyright information, and terms of use in the package should be provided. For popular datasets, \url{paperswithcode.com/datasets} has curated licenses for some datasets. Their licensing guide can help determine the license of a dataset.
        \item For existing datasets that are re-packaged, both the original license and the license of the derived asset (if it has changed) should be provided.
        \item If this information is not available online, the authors are encouraged to reach out to the asset's creators.
    \end{itemize}

\item {\bf New assets}
    \item[] Question: Are new assets introduced in the paper well documented and is the documentation provided alongside the assets?
    \item[] Answer: \answerNA{} 
    \item[] Justification: Our paper does not release new assets.
    \item[] Guidelines:
    \begin{itemize}
        \item The answer \answerNA{} means that the paper does not release new assets.
        \item Researchers should communicate the details of the dataset\slash code\slash model as part of their submissions via structured templates. This includes details about training, license, limitations, etc. 
        \item The paper should discuss whether and how consent was obtained from people whose asset is used.
        \item At submission time, remember to anonymize your assets (if applicable). You can either create an anonymized URL or include an anonymized zip file.
    \end{itemize}

\item {\bf Crowdsourcing and research with human subjects}
    \item[] Question: For crowdsourcing experiments and research with human subjects, does the paper include the full text of instructions given to participants and screenshots, if applicable, as well as details about compensation (if any)? 
    \item[] Answer: \answerNA{} 
    \item[] Justification: Our paper does not involve crowdsourcing nor research with human subjects.
    \item[] Guidelines:
    \begin{itemize}
        \item The answer \answerNA{} means that the paper does not involve crowdsourcing nor research with human subjects.
        \item Including this information in the supplemental material is fine, but if the main contribution of the paper involves human subjects, then as much detail as possible should be included in the main paper. 
        \item According to the NeurIPS Code of Ethics, workers involved in data collection, curation, or other labor should be paid at least the minimum wage in the country of the data collector. 
    \end{itemize}

\item {\bf Institutional review board (IRB) approvals or equivalent for research with human subjects}
    \item[] Question: Does the paper describe potential risks incurred by study participants, whether such risks were disclosed to the subjects, and whether Institutional Review Board (IRB) approvals (or an equivalent approval/review based on the requirements of your country or institution) were obtained?
    \item[] Answer: \answerNA{} 
    \item[] Justification: Our paper does not involve crowdsourcing nor research with human subjects.
    \item[] Guidelines:
    \begin{itemize}
        \item The answer \answerNA{} means that the paper does not involve crowdsourcing nor research with human subjects.
        \item Depending on the country in which research is conducted, IRB approval (or equivalent) may be required for any human subjects research. If you obtained IRB approval, you should clearly state this in the paper. 
        \item We recognize that the procedures for this may vary significantly between institutions and locations, and we expect authors to adhere to the NeurIPS Code of Ethics and the guidelines for their institution. 
        \item For initial submissions, do not include any information that would break anonymity (if applicable), such as the institution conducting the review.
    \end{itemize}

\item {\bf Declaration of LLM usage}
    \item[] Question: Does the paper describe the usage of LLMs if it is an important, original, or non-standard component of the core methods in this research? Note that if the LLM is used only for writing, editing, or formatting purposes and does \emph{not} impact the core methodology, scientific rigor, or originality of the research, declaration is not required.
    \item[] Answer: \answerNA{} 
    \item[] Justification: The core method development in this research does not involve LLMs as any important, original, or non-standard components.
    \item[] Guidelines:
    \begin{itemize}
        \item The answer \answerNA{} means that the core method development in this research does not involve LLMs as any important, original, or non-standard components.
        \item Please refer to our LLM policy in the NeurIPS handbook for what should or should not be described.
    \end{itemize}

\end{enumerate}

\end{document}